%% file: paper.tex
\documentclass[10pt]{fairmeta}
\usepackage{lmodern}

\input{math_commands.tex}

\usepackage{amssymb,amsthm}
\usepackage{graphicx}
\usepackage{algorithm}
\usepackage{algorithmic}
\usepackage{booktabs}

\usepackage{titletoc}
\usepackage{hyperref}
\usepackage{url}
\usepackage{capt-of}
\usepackage{wrapfig,booktabs}

\definecolor{TUCOSectionLink}{HTML}{1F4E79}
\definecolor{TUCOEquationLink}{HTML}{B22222}
\definecolor{TUCOTheoremLink}{HTML}{B22222}
\definecolor{TUCOCitationLink}{HTML}{006600}
\newcommand{\appsectionref}[1]{\hyperref[#1]{\textcolor{TUCOSectionLink}{\ref*{#1}}}}
\newcommand{\appequationref}[1]{\hyperref[#1]{\textcolor{TUCOEquationLink}{\ref*{#1}}}}
\newcommand{\apptheoremref}[1]{\hyperref[#1]{\textcolor{TUCOTheoremLink}{\ref*{#1}}}}

\hypersetup{
  colorlinks=true,
  linkcolor=TUCOSectionLink,
  citecolor=black,
  urlcolor=black
}

\newtheorem{theorem}{Theorem}

\tcbset{methodbox/.style={
  breakable,
  colback=black!7,
  colframe=black!7,
  boxrule=0pt,
  boxsep=2pt,
  left=2pt,
  right=2pt,
  top=2pt,
  bottom=2pt,
  sharp corners,
  before skip=4pt,
  after skip=4pt
}}
\tcolorboxenvironment{theorem}{methodbox}
\tcolorboxenvironment{restatedtheorem}{methodbox}

\title{TUCO: Curating Simulation Demonstrations for Sim-to-Real Robot Policy Co-Training}
\hypersetup{
  pdftitle={TUCO: Curating Simulation Demonstrations for Sim-to-Real Robot Policy Co-Training},
  pdfauthor={Ning Zhu, Mengfei Zhao, Yikai Tang, Zhangyujie Sun, Peihao Li, Dongyue Ni, Jindou Jia, Jianfei Yang},
  pdfsubject={Robot learning research preprint},
  pdfkeywords={robot learning, sim-to-real, data curation, co-training, TUCO}
}
\input{authors.tex}

\abstract{
Simulation demonstrations can supplement scarce real-world data for robot policy co-training.
However, the value of using data curation to actively select these demonstrations for sim-to-real co-training remains underexplored. Existing curation methods also lack a unified criterion for measuring trajectory-level utility and set-level coverage from closed-loop target behavior. To address these gaps, we present the first systematic study of data curation for sim-to-real robot policy co-training and propose \textbf{T}rajectory-level
\textbf{U}tility and set-level \textbf{C}overage \textbf{O}ptimization (\textbf{TUCO}). \textsc{TUCO} uses influence functions to trace how each source demonstration affects target-domain scoring rollouts. 
Our key insight is that these effects can be decomposed into an overall contribution to target return and variation across rollouts, providing a common closed-loop basis for measuring trajectory utility and set coverage. We further propose a performance-aligned subset optimizer that combines these measures in a unified curation objective to reduce redundancy and select complementary demonstrations. Extensive experiments on RoboMimic and OmniReset establish the value of active simulation data curation for sim-to-real policy co-training and show that \textsc{TUCO} achieves state-of-the-art performance across single-simulator, sim-to-sim, and sim-to-real settings.
}
\begin{document}
\maketitle

\section{Introduction}
\label{sec:intro}

Imitation learning is essential for robot manipulation, but the amount of available training data is limited by the cost of real-world teleoperation~\citep{mandlekar2022whatmatters,belkhale2023dataquality,lin2025scalinglaws,saxena2025largedatasets,jia2026mars,jia2026action,zhao2026apex}. 
Automated data generation in simulation offers a scalable alternative~\citep{mandlekar2023mimicgen,nasiriany2024robocasa,hou2026worldmodelrobotlearning}, motivating sim-to-real co-training that combines abundant simulation demonstrations from a source domain with limited real demonstrations from the target
deployment domain~\citep{maddukuri2025simreal,cheng2025domainadaptation}. However, domain shifts make source demonstrations differ in their utility to the target policy, while large source pools may include redundant or simulation-specific behaviors. Directly using the full pool can therefore increase training costs and introduce negative transfer~\citep{dass2026datamil,wei2026ambient}. This raises a central but unaddressed question for sim-to-real co-training: \textit{which simulation demonstrations should be selected to improve target-domain policy performance?}

Robot data curation aims to improve policy performance by constructing training subsets that exclude unhelpful or redundant demonstrations, with existing methods broadly falling into three groups. (1) \textbf{Quality-based approaches} filter noisy demonstrations using data statistics and execution cues~\citep{belkhale2023dataquality,hejna2025deminf,zhang2026scizor,sojib2026efficient}. They offer efficient offline selection but provide no direct link to closed-loop performance~\citep{bedi2026metrics}. (2) \textbf{Utility-based approaches} estimate downstream value from target similarity, validation loss, or policy behavior~\citep{chen2025demoscore,du2023behavior,memmel2025strap,dass2026datamil,lee2026quality}.
\textbf{Influence Function}-based methods are highly promising since they can estimate how reweighting a training example changes the learned model and a downstream metric~\citep{koh2017influence}. CUPID~\citep{agia2025cupid} applies influence functions to closed-loop evaluation by aggregating each demonstration’s effects across policy rollouts with their returns.  
Although the resulting scalar utility connects data selection to target performance, independent ranking does not account for relationships within the selected subset and can select redundant demonstrations while overlooking complementary ones. (3) \textbf{Diversity-based approaches} explicitly model these relationships to reduce redundancy and improve coverage~\citep{zhang2026scizor,sirigiri2026faktual,jia2024diverse}, but their criteria typically measure
differences in representation space without grounding them in closed-loop target behavior. Under
domain shift, such differences may reflect simulator-specific variation that offers little support for target-policy execution. These limitations yield another important question: \textit{How can trajectory-level utility and set-level coverage be jointly optimized in closed-loop performance?}

Our key observation is that the influence of a demonstration across closed-loop target-policy rollouts can be explicitly decomposed into two signals: an overall contribution to target return and the remaining variation across rollouts, providing a common closed-loop basis for measuring trajectory-level utility and set-level coverage. The overall contribution indicates how much a demonstration supports target performance, while relationships among the remaining components reveal redundancy and complementarity across demonstrations. However, since variation across rollouts alone does not imply a positive contribution to target performance, subset selection should retain demonstrations whose effects align with target success and favor candidates that add new closed-loop support beyond the current subset.

Beyond these methodological limitations, existing data curation research has a limited evaluation scope. Most studies focus on single-domain settings or transfer between simulators, or among real-world datasets~\citep{du2023behavior,lin2025flowretrieval,memmel2025strap,hejna2025remix,dass2026datamil,yang2026analogies}, leaving sim-to-real policy co-training underexplored. Meanwhile, sim-to-real co-training studies primarily examine mixing ratios and training strategies~\citep{maddukuri2025simreal,cheng2025domainadaptation,wei2026ambient}, with limited attention to which simulation demonstrations best support real-world policy learning. We aim to extend data curation from single-simulator and sim-to-sim settings to sim-to-real co-training through a unified protocol. Within this
broader evaluation, we explore whether actively selecting simulation demonstrations improves real-world policy performance when the numbers of source and target demonstrations are held fixed.

To our knowledge, we conduct the first systematic study of simulation data curation for sim-to-real robot policy co-training and introduce \textbf{T}rajectory-level \textbf{U}tility and set-level \textbf{C}overage
\textbf{O}ptimization (\textbf{TUCO}), a novel set-level curation framework grounded in target-policy closed-loop behavior. First, we present a closed-loop influence decomposition that separates each
demonstration’s overall contribution to target return from its variation across target-domain scoring rollouts, providing a common basis for measuring trajectory-level utility and set-level coverage. Second, we introduce a performance-aligned subset optimizer that combines both measures in a unified curation objective. It favors demonstrations that support target success and complement the current subset, reducing redundancy during selection. We establish a unified evaluation protocol on RoboMimic~\citep{mandlekar2022whatmatters} and OmniReset~\citep{yin2026omnireset} spanning single-simulator, sim-to-sim, and sim-to-real settings and show that \textsc{TUCO} achieves state-of-the-art curation performance. More importantly, 
we show that active simulation data curation improves sim-to-real policy co-training: TUCO improves over full-pool co-training by 65.0, 35.0, and 80.0 percentage points ($\mathrm{pp}$) across the three real-world tasks and outperforms random selection with the same numbers of source and target demonstrations.

\section{Related Work}
\label{sec:related-work}

\textbf{Data Curation for Robot Learning.}
Existing approaches assess demonstrations through quality, utility, or diversity. Quality-based methods use
statistical dependence~\citep{hejna2025deminf} and execution cues~\citep{zhang2026scizor,sojib2026efficient} to estimate demonstration quality. Utility-based methods condition selection on target-data similarity~\citep{du2023behavior,memmel2025strap} or estimated training contributions~\citep{chen2025demoscore,dass2026datamil}, with influence functions relating these contributions to
imitation loss~\citep{lee2026quality} and closed-loop return~\citep{agia2025cupid,xu2026athena}. Diversity-based
methods construct subsets through deduplication and trajectory diversity~\citep{zhang2026scizor,sirigiri2026faktual} or distribution matching~\citep{feng2025tarot}.

\textbf{Cross-Domain Robot Co-Training.}
Simulation data generation~\citep{mandlekar2023mimicgen,nasiriany2024robocasa, zhao2026axisgrowablecommunitydrivendata} supports co-training with limited
real demonstrations. Composition-oriented studies examine simulation scale and mixing ratios~\citep{wei2025empirical,maddukuri2025simreal}, while alignment methods improve source-target compatibility through observation-action alignment~\citep{cheng2025domainadaptation} and visual
invariance~\citep{yang2025invariance}. Complementary approaches regulate source contributions through learned
mixture weights~\citep{hejna2025remix,sima2026promix} and training schedules~\citep{wei2026ambient}.

\section{Preliminaries: Data Attribution via Influence Functions}
\label{sec:preliminary}
Effective data curation depends on understanding how each training sample affects the output. \textbf{Data Attribution} quantifies this dependence by estimating how changes to training data affect a model's performance. Evaluating these effects through repeated retraining is expensive, so \textbf{Influence Functions} estimate them locally by perturbing the weight of a training example~\citep{koh2017influence}. Consider a training set $\mathcal{D}:=\{z_i\}_{i=1}^{n}$ with per-example loss $\ell(z;\theta)$ and parameters $\theta\in\mathbb R^{d_\theta}$. The learned parameters $\theta_0:=\theta(\mathcal{D})$ minimize $\mathcal{L}(\theta;\mathcal{D}):=n^{-1}\sum_{i=1}^{n}\ell(z_i;\theta)$. Increasing the weight of an example $z$ by $\epsilon$ gives the perturbed objective $\mathcal{L}(\theta;\mathcal{D})+\epsilon\ell(z;\theta)$, whose local minimizer near $\theta_0$ is denoted by $\theta_{\epsilon,z}$. Let $f(\hat{z};\theta)$ be a differentiable metric evaluated on an example $\hat{z}$ and $\mathbf{H}_0:=\nabla_{\theta}^{2}\mathcal{L}(\theta_0;\mathcal{D})\in\mathbb R^{d_\theta\times d_\theta}$ the training-loss Hessian. With twice continuously differentiable training losses and a nonsingular $\mathbf{H}_0$, the influence of $z$ on this metric is
$\Psi_{\mathrm{inf}}(\hat{z},z)
:=
\left.\partial_{\epsilon}f(\hat{z};\theta_{\epsilon,z})\right|_{\epsilon=0}
=
-\nabla_{\theta}f(\hat{z};\theta_0)^{\top}
\mathbf{H}_0^{-1}
\nabla_{\theta}\ell(z;\theta_0)$.
This measures how upweighting a training example changes the evaluation metric through its effect on the learned parameters.

\section{Methodology}
\label{sec:method}

\begin{figure}[!t]
    \centering
    \includegraphics[width=\textwidth]{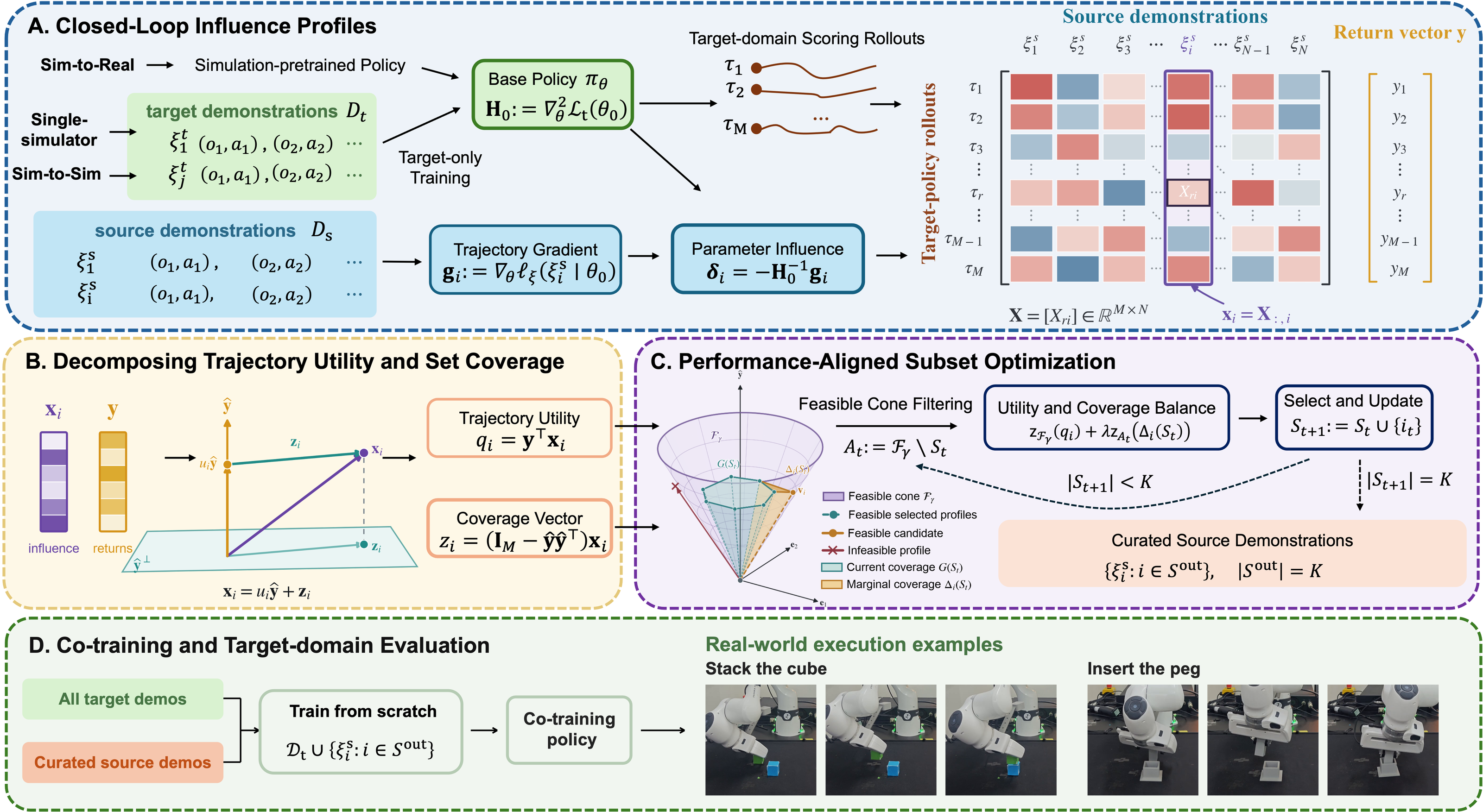}
    \caption{\textbf{Overview of TUCO.}
  (\textbf{A}) Influence functions estimate source demonstration effects on target-domain scoring rollouts at a fixed base policy, yielding the influence matrix $\mathbf X$ and return vector $\mathbf y$.
  (\textbf{B}) Each influence profile is decomposed for trajectory utility and set coverage.
  (\textbf{C}) TUCO filters candidates by performance alignment and iteratively selects trajectories using utility and marginal coverage.
  (\textbf{D}) The curated source subset is combined with target demonstrations for co-training and target-domain evaluation.}
    \label{fig:main}
\end{figure}

 In this section, we present \textsc{TUCO} for source demonstration selection in target-domain policy co-training. First, we formulate the source-to-target co-training problem. Second, we decompose closed-loop influence profiles to measure trajectory utility and set coverage. Third, we combine both measures in a performance-aligned subset optimizer. Figure~\ref{fig:main} provides an overview.

\subsection{Problem Formulation}
\label{sec:problem-formulation}

\paragraph{Imitation Learning.}
We consider a source environment $\mathcal E_{\mathrm s}$ and a target environment $\mathcal E_{\mathrm t}$ with compatible observation and action interfaces but potentially different observations or dynamics. A demonstration $\xi=((o_h,a_h))_{h=0}^{T_\xi-1}$ is an observation-action trajectory, where $o_h\in\mathcal O\subseteq\mathbb R^{d_o}$ and $a_h\in\mathcal A\subseteq\mathbb R^{d_a}$. Given a demonstration set $\mathcal D$, behavior cloning~\citep{argall2009survey,osa2018algorithmic} trains a policy $\pi_\theta$ with $\theta\in\mathbb R^{d_\theta}$ by minimizing
$
\theta(\mathcal D)
\in
\arg\min_{\theta'}
\left\{
\mathcal L_{\mathrm{IL}}(\theta',\mathcal D)
:=
\mathbb E_{(o,a)\sim\nu_{\mathcal D}}
\left[\ell(o,a\mid\theta')\right]
\right\}.
$ $\nu_{\mathcal D}$ is the observation-action sampling distribution. We evaluate the policy in $\mathcal E_{\mathrm t}$ through its expected return $J_{\mathrm t}(\theta):=\mathbb E_{\tau\sim p_{\mathrm t}(\tau\mid\pi_\theta)}[R_{\mathrm t}(\tau)]$, where $p_{\mathrm t}$ denotes the target rollout distribution and $R_{\mathrm t}$ the task return. For binary tasks, we assign return $+1$ to success and $-1$ to failure.

\paragraph{Source-to-Target Data Curation.}
Let $\mathcal D_{\mathrm t}:=\{\xi_j^{\mathrm t}\}_{j=1}^{n_{\mathrm t}}$ denote the fixed target demonstrations and $\mathcal D_{\mathrm s}:=\{\xi_i^{\mathrm s}\}_{i=1}^{N}$ the source candidate pool. Given a budget $K\leq N$, we select an index set $S\subseteq[N]:=\{1,\ldots,N\}$ of size $K$ and combine the corresponding trajectories $\mathcal D_{\mathrm s}(S):=\{\xi_i^{\mathrm s}:i\in S\}$ with all target demonstrations for policy training. In sim-to-real comparisons, we hold the source-to-target sampling ratio fixed across curation methods. The curation objective is
$
S^\star
\in
\arg\max_{\substack{S\subseteq[N],|S|=K}}
J_{\mathrm t}
\left(
\theta\!\left(
\mathcal D_{\mathrm t}\cup\mathcal D_{\mathrm s}(S)
\right)
\right).
\label{eq:curation-objective}
$

This formulation covers three source-to-target settings: (1) \textbf{Single-Simulator}, where all demonstrations come from the same simulated environment. We evaluate both selection,
  which augments a fixed target set with selected candidates, and filtering, which retains a subset of an existing training set; (2) \textbf{Sim-to-Sim}, where source and target
  demonstrations come from different simulators; and (3) \textbf{Sim-to-Real}, where source demonstrations come from simulation and target demonstrations from the physical environment. 
  

\subsection{Closed-Loop Influence Profiles}
\label{sec:closed-loop-attribution}
We first obtain a base policy $\pi_{\theta_0}$ and use its closed-loop executions in the target environment $\mathcal E_{\mathrm t}$ as feedback for curation. In single-simulator and sim-to-sim settings, we train this policy on target demonstrations and collect its rollouts, using the original unfiltered training set for single-simulator filtering. In sim-to-real, we use a simulation-pretrained policy and reuse its successful rollouts recorded during real-robot deployment. We keep the base policy fixed and score all source candidates against the same set of target-domain scoring rollouts $\mathcal R_{\mathrm t}:=\{\tau_r\}_{r=1}^{M}$. We use influence functions to estimate how each source demonstration affects these target executions. For a source trajectory of length $T_i$, let $\ell_\xi(\xi_i^{\mathrm s}\mid\theta):=\sum_{h=0}^{T_i-1}\ell(o_{i,h}^{\mathrm s},a_{i,h}^{\mathrm s}\mid\theta)$ denote its training loss. We first derive the profile for a target-trained base policy $\theta_0:=\theta(\mathcal D_{\mathrm t})$. Increasing the source trajectory's weight in the target training objective $\mathcal L_{\mathrm t}(\theta):=\mathcal L_{\mathrm{IL}}(\theta,\mathcal D_{\mathrm t})$ gives the local minimizer $\theta_{\epsilon,i}$ of $\mathcal L_{\mathrm t}(\theta)+\epsilon\ell_\xi(\xi_i^{\mathrm s}\mid\theta)$ near $\theta_0$. For a target rollout $\tau_r$, the influence expression gives
$
X_{ri}
:=
\left.\partial_\epsilon
\log p_{\mathrm t}(\tau_r\mid\pi_{\theta_{\epsilon,i}})
\right|_{\epsilon=0}
=
-\mathbf h_r^\top\mathbf H_0^{-1}\mathbf g_i,
\label{eq:rollout-wise-influence}
$
where $\mathbf g_i:=\nabla_\theta\ell_\xi(\xi_i^{\mathrm s}\mid\theta_0)\in\mathbb R^{d_\theta}$ is the source trajectory's training gradient, $\mathbf H_0:=\nabla_\theta^2\mathcal L_{\mathrm t}(\theta_0)\in\mathbb R^{d_\theta\times d_\theta}$ is the target training-loss Hessian and $\mathbf h_r:=\left.\nabla_\theta\log p_{\mathrm t}(\tau_r\mid\pi_\theta)\right|_{\theta=\theta_0}\in\mathbb R^{d_\theta}$ is the rollout log-likelihood gradient. A positive $X_{ri}$ means that locally upweighting demonstration $i$ makes rollout $r$ more likely. Collecting these effects over $\mathcal R_{\mathrm t}$ gives the matrix $\mathbf X:=[X_{ri}]\in\mathbb R^{M\times N}$. Its column $\mathbf x_i:=\mathbf X_{:,i}\in\mathbb R^M$ forms the rollout-conditioned influence profile of demonstration $i$. The profile records which executions become more or less likely, and their returns determine resulting performance contributions. With $y_r:=R_{\mathrm t}(\tau_r)$ and $\mathbf y:=[y_1,\ldots,y_M]^\top\in\mathbb R^M$, we obtain
$
q_i:=\mathbf y^\top\mathbf x_i
=\sum_{r=1}^{M}y_rX_{ri}.
\label{eq:raw-utility}
$
For independent rollouts sampled from $p_{\mathrm t}(\cdot\mid\pi_{\theta_0})$, $q_i/M$ estimates $\left.\partial_\epsilon J_{\mathrm t}(\theta_{\epsilon,i})\right|_{\epsilon=0}$ when profile entries are exact and the conditions in Appendix~\ref{app:influence-derivation} hold. In practice, we construct $\mathbf X$ using the projected loss-gradient estimator in Appendix~\ref{app:influence-estimation}. For sim-to-real curation, we evaluate these gradients at the frozen simulation-pretrained policy and set $y_r=1$ for each recorded successful execution, so $q_i$ measures estimated support for the observed successful target behaviors. Our key insight is that each influence profile can be decomposed into a component along the target-return direction and an orthogonal component. The former preserves trajectory-level utility, while the latter captures differences in rollout influence among demonstrations. This decomposition provides a common closed-loop basis for measuring individual utility and complementary coverage within a selected set.

\subsection{Decomposing Trajectory Utility and Set Coverage}
\label{sec:influence-decomposition}

The utility $q_i$ supports trajectory ranking through CUPID's linear subset objective $F_{\mathrm{CUPID}}(S):=\sum_{i\in S}q_i$, which is maximized by Top-$K$ selection. However, aggregation into $q_i$ leaves differences orthogonal to the return vector unrepresented, so distinct rollout influence profiles can receive the same utility. Each trajectory also contributes a fixed $q_i$ regardless of the selected set, preventing these differences from influencing whether trajectories are selected together. We therefore introduce a closed-loop influence decomposition that preserves trajectory utility while exposing the remaining differences across target-domain scoring rollouts. Let $\widehat{\mathbf y}:=\mathbf y/\|\mathbf y\|_2\in\mathbb R^M$ and let $\mathbf I_M\in\mathbb R^{M\times M}$ denote the identity matrix. We decompose
\begin{equation}
\mathbf x_i
=
\underbrace{u_i\widehat{\mathbf y}}_{\text{utility component}}
+
\underbrace{\mathbf z_i}_{\text{coverage representation}},
\qquad
u_i:=\widehat{\mathbf y}^{\top}\mathbf x_i,
\qquad
\mathbf z_i
:=
\left(
\mathbf I_M-\widehat{\mathbf y}\widehat{\mathbf y}^{\top}
\right)\mathbf x_i.
\label{eq:influence-decomposition}
\end{equation}
The coordinate $u_i$ preserves trajectory-level utility through $q_i=\|\mathbf y\|_2u_i$. The residual $\mathbf z_i\in\mathbb R^M$ captures how trajectories with similar utility affect scoring rollouts differently, providing a basis for measuring their redundancy and complementarity. We use these residual directions to measure the additional coverage a candidate provides relative to the selected set.

\subsection{Performance-Aligned Subset Optimization}
\label{sec:subset-optimization}
The residuals reveal differences across target-domain scoring rollouts, but covering these differences is useful only when the selected demonstrations also support target performance. We therefore develop a performance-aligned subset optimizer to combine this coverage with trajectory utility. We first restrict coverage optimization to candidates with positive estimated trajectory-utility and sufficient alignment with $\mathbf y$, defining the feasible set
\begin{equation}
\mathcal F_\gamma:=
\left\{i\in[N]:
q_i>0,\ \|\mathbf x_i\|_2>0,\
\frac{\mathbf x_i^\top\mathbf y}{\|\mathbf x_i\|_2\|\mathbf y\|_2}\geq\gamma
\right\},
\label{eq:feasible-cone}
\end{equation}
where $\gamma\in[0,1)$ controls the width of the cone around the target-return direction. Through this constraint, we require the return-aligned component to account for at least a fraction $\gamma$ of the influence-profile norm. Within this feasible pool, we aim to improve set-level coverage by selecting demonstrations that contribute influence directions poorly represented in the current subset. The intuition is to view coverage as the volume spanned by the selected residual vectors, so a demonstration contributes through the additional volume it introduces. To focus on directional coverage, we normalize residuals to $\mathbf v_i:=\mathbf z_i/\|\mathbf z_i\|_2\in\mathbb R^M$. We also give higher-utility trajectories greater weight through $w_i:=q_i/\kappa_{\mathcal F}$, where $\kappa_{\mathcal F}:=|\mathcal F_\gamma|^{-1}\sum_{j\in\mathcal F_\gamma}q_j$. We formalize this intuition with a quality-weighted, regularized log-determinant criterion and its marginal gain:
\begin{equation}
G(S):=\log\det\!\left(\mathbf I_M+\rho\sum_{i\in S}w_i\mathbf v_i\mathbf v_i^\top\right),
\qquad
\Delta_i(S):=G(S\cup\{i\})-G(S),
\label{eq:logdet-coverage}
\end{equation}
where $S\subseteq\mathcal F_\gamma$ and $\rho>0$ controls the coverage scale. For comparable utility weights, this criterion gives smaller gains to directions already well represented in $S$. The identity term keeps the measure well defined for subsets of any size. We provide the precise volume relationship and efficient computation in Appendix~\ref{app:subset-computation}. Because a candidate's additional coverage depends on the demonstrations already selected, we update its marginal gain as the subset grows. We combine this gain with trajectory utility to favor demonstrations that are both useful and complementary. To put the two terms on comparable scales, we use population z-scores $\mathsf z_A$ over the fixed feasible pool for utility and the remaining candidates for coverage. We assign zero when the corresponding variance vanishes. Starting from $S_0=\varnothing$, we select from $A_t:=\mathcal F_\gamma\setminus S_t$ according to
\begin{equation}
i_t\in\argmax_{i\in A_t}
\left[
\mathsf z_{\mathcal F_\gamma}(q_i)
+\lambda\mathsf z_{A_t}\!\left(\Delta_i(S_t)\right)
\right],
\qquad
S_{t+1}:=S_t\cup\{i_t\},
\label{eq:greedy-rule}
\end{equation}
where $\lambda\geq0$ balances utility and coverage. We keep the influence profiles fixed throughout selection and recompute marginal gains after each addition until the budget or feasible pool is exhausted. When $|\mathcal F_\gamma|<K$, we retain all feasible candidates and fill the remaining slots with the highest-utility unselected trajectories. For co-training, we retrain the policy on all fixed target demonstrations with the selected source demonstrations, $\mathcal D_{\mathrm t}\cup\mathcal D_{\mathrm s}(S^{\mathrm{out}})$, see Algorithm~\ref{alg:subset-selection} for summary.

\begin{algorithm}[htbp]
\caption{\textsc{TUCO}: Trajectory-Level Utility and Set-Level Coverage Optimization}
\label{alg:subset-selection}
\small
\begin{algorithmic}[1]
\REQUIRE Influence matrix $\mathbf X\in\mathbb R^{M\times N}$, returns $\mathbf y\in\mathbb R^M$, budget $K$, parameters $\gamma$, $\lambda$, $\rho$
\ENSURE Selected source indices $S^{\mathrm{out}}\subseteq[N]$
\STATE $\widehat{\mathbf y}\leftarrow\mathbf y/\|\mathbf y\|_2$
\STATE $q_i\leftarrow\mathbf y^\top\mathbf x_i$, $\mathbf z_i\leftarrow\mathbf x_i-(\widehat{\mathbf y}^\top\mathbf x_i)\widehat{\mathbf y}$ for $i\in[N]$
\STATE $\mathcal F_\gamma\leftarrow\{i\in[N]:q_i>0,\ \|\mathbf x_i\|_2>0,\ \mathbf x_i^\top\widehat{\mathbf y}/\|\mathbf x_i\|_2\geq\gamma\}$
\STATE Normalize residuals to $\mathbf v_i$ and utilities to $w_i$ over $\mathcal F_\gamma$ when nonempty
\STATE $S\leftarrow\varnothing$, $\mathbf M\leftarrow\mathbf I_M\in\mathbb R^{M\times M}$
\WHILE{$|S|<\min(K,|\mathcal F_\gamma|)$}
    \STATE $A\leftarrow\mathcal F_\gamma\setminus S$
    \STATE $\Delta_i\leftarrow\log(1+\rho w_i\mathbf v_i^\top\mathbf M^{-1}\mathbf v_i)$ for $i\in A$
    \STATE $i^\star\leftarrow\argmax_{i\in A}[\mathsf z_{\mathcal F_\gamma}(q_i)+\lambda\mathsf z_A(\Delta_i)]$
    \STATE $\mathbf M\leftarrow\mathbf M+\rho w_{i^\star}\mathbf v_{i^\star}\mathbf v_{i^\star}^\top$, $S\leftarrow S\cup\{i^\star\}$
\ENDWHILE
\IF{$|S|<K$}
    \STATE Add the $K-|S|$ highest-utility indices from $[N]\setminus S$ to $S$
\ENDIF
\RETURN $S^{\mathrm{out}}\leftarrow S$
\end{algorithmic}
\end{algorithm}

\section{Experiments}
\label{sec:experiments}
We conduct a series of experiments to compare TUCO with state-of-the-art robot data curation baselines across Single-Simulator, Sim-to-Sim, and Sim-to-Real co-training. Following CUPID~\citep{agia2025cupid}, the Single-Simulator study uses the \texttt{Lift}, \texttt{Square}, and \texttt{Transport} tasks from RoboMimic's multi-human (MH) task suite~\citep{mandlekar2022whatmatters}. The cross-domain studies use the \texttt{Peg}, \texttt{StackCube}, and \texttt{CupCake} tasks from OmniReset~\citep{yin2026omnireset}.  We aim to answer:
(\textbf{Q1}) Can TUCO enable reliable curation as the domain gap changes? (\textbf{Q2}) Can active data curation improve sim-to-real co-training performance?

\paragraph{Experimental Settings.}
We follow the source-to-target curation problem defined in Section~\ref{sec:method}. In the \textbf{single-simulator setting}, source demonstrations and target-policy evaluation share the same RoboSuite environment. Following CUPID~\citep{agia2025cupid}, we separately partition each 300-demonstration RoboMimic dataset for filtering and selection. Filtering removes demonstrations from an initial set of 192, with 12 reserved for validation. Selection augments 48 fixed target demonstrations with a subset of 240 source candidates, using the remaining 12 for validation. In the \textbf{sim-to-sim setting}, IsaacSim is the source domain and MuJoCo is the target domain. \texttt{Peg} uses 3,000 source and 500 target demonstrations, while \texttt{StackCube} and \texttt{CupCake} each use 5,000 source and 1,000 target demonstrations. In both simulated settings, each base policy provides 100 target-domain scoring rollouts. In the \textbf{sim-to-real setting}, we reuse 10 recorded successful real-world rollouts per task from a frozen simulation-pretrained base policy. These serve as both the target demonstrations for co-training and the scoring rollouts for curation. All curation methods select 600 demonstrations from the same pool of 1,200 simulation demonstrations. For TUCO, we fix the cone threshold at $\gamma=0.01$, the coverage scale $\rho=0.1$ and use $\lambda=0.2$ throughout. We refer the detailed setup to Appendix~\ref{app:experimental-details}.

\paragraph{Baselines and Evaluation Protocol.}
For single-simulator and sim-to-sim settings, we compare with representative methods from the three main families of robot data curation. \textbf{Quality-based} baselines include DemInf~\citep{hejna2025deminf} and PSD~\citep{sojib2026efficient}. \textbf{Utility-based} baselines include Demo-SCORE~\citep{chen2025demoscore}, Success Similarity and CUPID~\citep{agia2025cupid}, QoQ~\citep{lee2026quality}, and DataMIL~\citep{dass2026datamil}. \textbf{Diversity-based} baselines include FAKTUAL~\citep{sirigiri2026faktual}, SIEVE~\citep{wu2026sieve}, and TAROT~\citep{feng2025tarot}. For sim-to-real settings, we compare with Success Similarity, CUPID~\citep{agia2025cupid}, DataMIL~\citep{dass2026datamil} and FAKTUAL~\citep{sirigiri2026faktual}. \footnote{Given real-robot costs, we prioritize baselines closely aligned with target-conditioned subset selection. Demo-SCORE requires unavailable failure feedback. Broader comparisons are conducted in simulation.}We also report random selection together with target-only training and co-training on the full source pool.  Each selected subset is used to train a policy from scratch, and performance is measured by closed-loop success rate in the target domain.

\subsection{Main Results: From Single-Simulator Curation to Sim-to-Real Co-Training}

\paragraph{Single-Simulator.}

We first evaluate TUCO under the standard single-simulator protocols, which isolate data curation from domain shift. Figure~\ref{fig:singlesim-main-results} shows that TUCO achieves the highest or tied-highest mean success rate in 18 of 25 task--budget combinations. On \texttt{Square}, TUCO improves over CUPID by 14.2 $\mathrm{pp}$ after filtering 50\% of the demonstrations and by 15.8 $\mathrm{pp}$ when selecting 10\% of the candidates. Notably, TUCO matches full-data training on \texttt{Lift} with only 25\% of the original training set and surpasses it on \texttt{Square} with 50\%. These results establish the value of curation in a controlled setting and motivate its evaluation in the cross-domain settings that follow.

\begin{figure}[!t]
    \centering
    \includegraphics[width=0.95\textwidth]{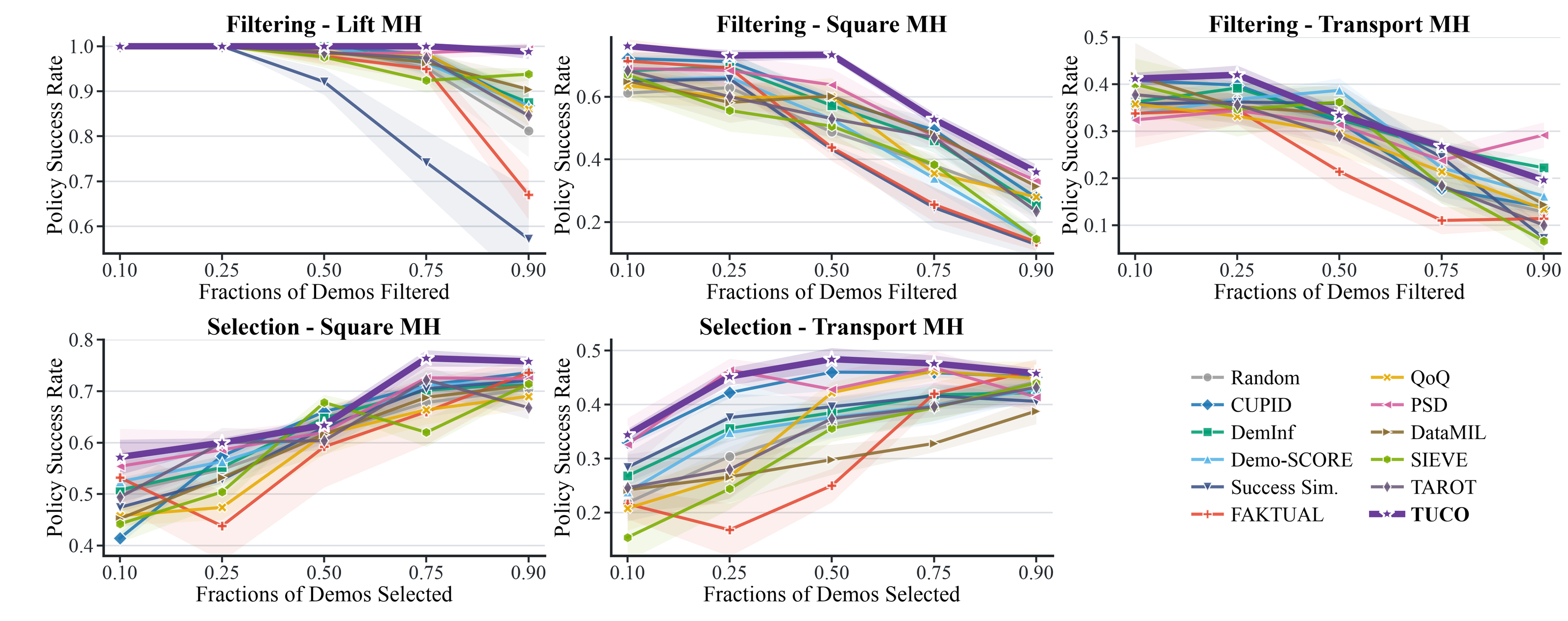}
    \caption{Single-simulator data curation on RoboMimic under demonstration filtering and selection. Results are averaged over 3 random seeds. Success rates are computed over 50 evaluation rollouts from the last 10 checkpoints (500 rollouts total).}
    \label{fig:singlesim-main-results}
\end{figure}

\paragraph{Sim-to-Sim Co-Training.}
\begin{figure}[!t]
    \centering
    \includegraphics[width=0.95\textwidth]{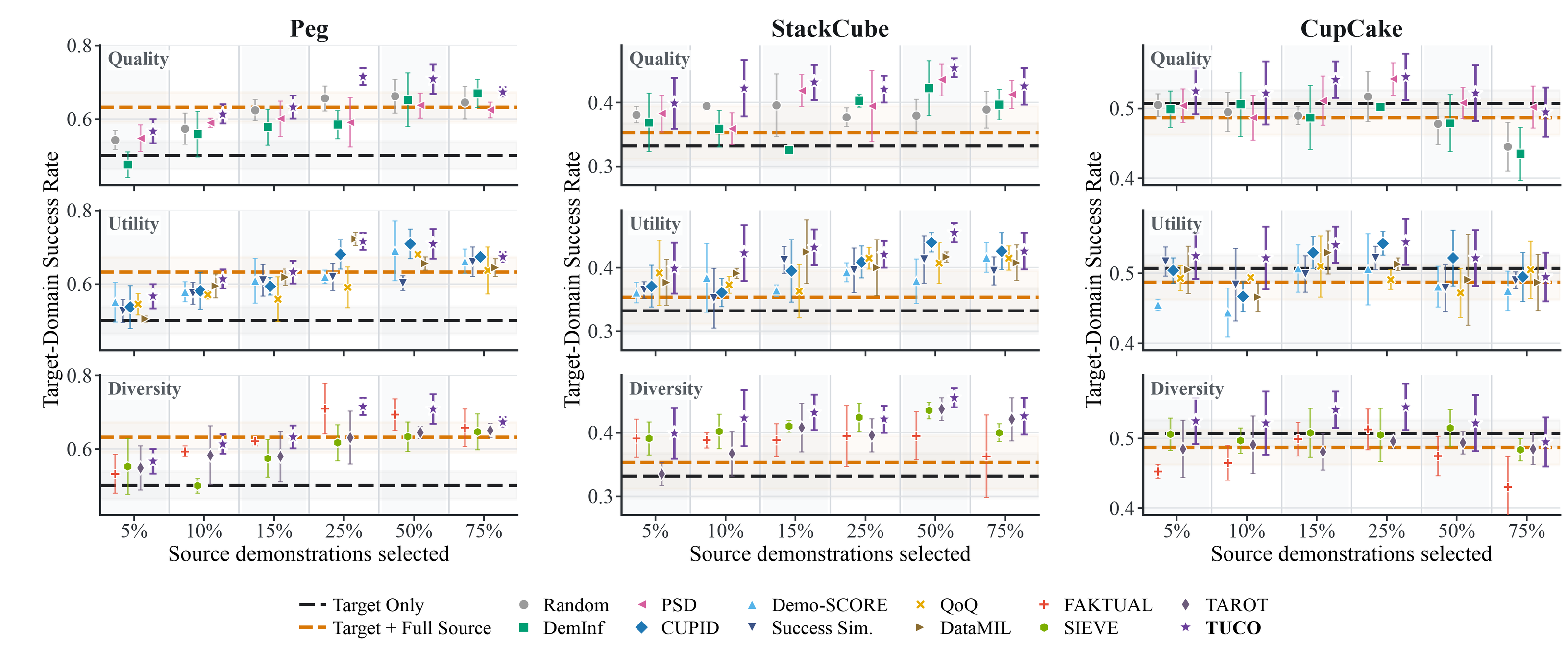}
    \caption{Sim-to-Sim co-training with curated IsaacSim demonstrations and fixed MuJoCo target data across different source-data budgets. Results are averaged over 3 random seeds. Success rates are computed over 50 evaluation rollouts from the last 5 checkpoints (250 rollouts total).}
    \label{fig:sim-to-sim-main-results}
\end{figure}

We first examine cross-domain curation in Sim-to-Sim as a controlled
setting with observation and dynamics mismatch. Figure~\ref{fig:sim-to-sim-main-results} shows that TUCO achieves the highest mean success rate in 15 of the 18 task--budget combinations and outperforms random selection in all 18. Its best curated subsets exceed full-pool co-training by 8.4 $\mathrm{pp}$ on \texttt{Peg}, 10.2 $\mathrm{pp}$ on \texttt{StackCube}, and 5.8 $\mathrm{pp}$ on \texttt{CupCake}. Full-pool co-training improves \texttt{Peg}, offers only a modest gain on \texttt{StackCube}, and causes negative transfer on \texttt{CupCake}. In contrast, each task reaches its best performance with a curated subset. This shows that successful source demonstrations can differ substantially in their target-domain value under simulator shift. The comparison with random selection further shows that the gain comes from identifying suitable trajectories rather than merely reducing the amount of source data. Quality-based methods remain inconsistent because source-domain quality does not directly indicate transfer value. Utility-based methods better reflect the target task but can retain redundant trajectories through independent ranking. Diversity-based methods perform well at some budgets, although source-space variation does not always support target-domain behavior. TUCO is more consistent by grounding both utility and coverage in target-domain scoring rollouts. 

\paragraph{Sim-to-Real Co-Training.}

\begin{table}[t!]
    \centering
    \caption{Success rates (\%) across three real-world tasks.
    Each method is evaluated over 40 trials per task (120 real-robot trials in total).
    Avg. denotes the macro-average across tasks.}
    \label{tab:sim2real-tasks}
    \small
    \setlength{\tabcolsep}{7pt}
    \begin{tabular}{@{}lcccc@{}}
        \toprule
        Method / training data 
        & \texttt{Peg} 
        & \texttt{StackCube} 
        & \texttt{CupCake}
        & \textbf{Avg.} \\
        \midrule

        \multicolumn{5}{@{}l}{\emph{Training-data baselines}} \\
        10 real demos + all sim data
            & 17.5 & 40.0 & 20.0 & 25.8 \\
        10 real demos
            & 15.0 & 35.0 & 0.0 & 16.7 \\

        \addlinespace
        \multicolumn{5}{@{}l}{\emph{Compared methods}} \\
        Random
            & 7.5 & 57.5 & 12.5 & 25.8 \\
        CUPID
            & 75.0 & 65.0 & 80.0 & 73.3 \\
        DataMIL
            & 22.5 & 65.0 & \textbf{100.0} & 62.5 \\
        FAKTUAL
            & 10.0 & \textbf{75.0} & \textbf{100.0} & 61.7 \\
        Success Similarity
            & 5.0 & 50.0 & 60.0 & 38.3 \\
        TUCO (ours)
            & $\mathbf{82.5}^{\dagger\ddagger}$ & $\mathbf{75.0}^{\dagger}$ & $\mathbf{100.0}^{\dagger\ddagger}$ & \textbf{85.8} \\

        \bottomrule
    \end{tabular}
    \par\smallskip
    \begin{minipage}{\linewidth}
        \footnotesize
        For TUCO, $\dagger$ and $\ddagger$ indicate significant improvements over full-pool co-training and random selection, respectively (two-sided Fisher's exact tests with Holm correction across six comparisons, adjusted $p<0.05$). 
    \end{minipage}
\end{table}

Finally, we examine whether active curation improves co-training
on physical manipulation tasks. Table~\ref{tab:sim2real-tasks} shows that TUCO achieves the highest success rate on all three tasks.
Although FAKTUAL matches TUCO on \texttt{StackCube} and \texttt{CupCake}, its success rate is 72.5 $\mathrm{pp}$ lower on \texttt{Peg}, highlighting the consistency of TUCO across tasks. More importantly, TUCO improves over full-pool co-training by 65.0 $\mathrm{pp}$, 35.0 $\mathrm{pp}$, and 80.0 $\mathrm{pp}$ on \texttt{Peg}, \texttt{StackCube}, and \texttt{CupCake}, respectively. With the same numbers of source and target demonstrations, TUCO also outperforms random selection by 75.0 $\mathrm{pp}$, 17.5 $\mathrm{pp}$, and 87.5 $\mathrm{pp}$ supporting source-data composition as a consequential design choice in sim-to-real co-training. These gains are obtained using only 10 recorded successful real-world rollouts per task from a simulation-pretrained base policy. While the simulated experiments collect scoring rollouts from a target-trained policy, the real-world experiments reuse these existing executions as both scoring feedback and target demonstrations. Even with identical success labels, their influence profiles retain differences in how source demonstrations support individual executions, allowing utility and coverage to guide selection. The results therefore show that limited successful real-world experience can provide useful feedback for simulation data curation without additional scoring rollouts. 

\subsection{Ablation Results}

We further ablate TUCO to examine the contributions of trajectory utility, set coverage, and the cone constraint. \textbf{Utility} uses closed-loop utility ranking and corresponds to CUPID. \textbf{Coverage + Cone} removes the additive utility term and selects by quality-weighted marginal coverage. Once the cone is exhausted,it continues this selection outside the cone, retaining signed utility
weights. \textbf{Utility + Coverage} sets $\gamma=0$ while retaining both terms in the selection objective. Table~\ref{tab:component-ablation} compares these variants with TUCO, where $U$, $C$, and $G$ denote utility, coverage, and the cone constraint. TUCO achieves the highest in 15 of the 18 settings. Compared with $U$, it improves mean performance by 2.1 $\mathrm{pp}$ on average. The gain increases to 3.0 $\mathrm{pp}$ for budgets up to $25\%$, suggesting that coverage is particularly useful when selection is restrictive. The coverage-driven variant yields lower mean performance in 16 settings. Setting $\gamma=0$ lowers it in 12 settings. These results indicate that coverage benefits from direct utility guidance and a performance-aligned candidate pool. At the $75\%$ budget, TUCO and $U+C$ exhaust their feasible pools and complete selection by utility ranking, recovering the same subsets as $U$. The coverage-based completion in $C+G$ can produce different subsets at this budget. Figure~\ref{fig:cone-distribution} helps explain why the angular margin has a clear effect despite changing the feasible pool by just $4.1\%$ on \texttt{peg} and $3.6\%$ on \texttt{stackcube}. The excluded candidates are nearly orthogonal to target return, so their residual components can yield large coverage gains while providing little support for target success. Removing these misleading directions keeps coverage focused on performance-aligned behavior.

\begin{figure}[t!]
    \centering
    \begin{minipage}[t]{0.50\textwidth}
        \vspace{0pt}
        \centering
          \captionof{table}{Component ablations in sim-to-sim co-training. We report success rates (\%).}
        \label{tab:component-ablation}
        \vspace{1pt}
        \footnotesize
        \setlength{\tabcolsep}{3.2pt}
        \renewcommand{\arraystretch}{1.10}
        \resizebox{\linewidth}{!}{%
        \begin{tabular}{c|cccc}
            \toprule
            Budget (\%) & $U$ & $C{+}G$ & $U{+}C$ & TUCO \\
            \midrule
            \multicolumn{5}{l}{\texttt{Peg}} \\
            $5$  & $53.8{\pm}5.8$ & $54.8{\pm}4.1$ & $\mathbf{57.0{\pm}3.9}$ & $56.7{\pm}3.3$ \\
            $10$ & $58.2{\pm}5.0$ & $59.6{\pm}2.1$ & $58.6{\pm}5.7$ & $\mathbf{61.4{\pm}2.6}$ \\
            $15$ & $59.4{\pm}2.4$ & $61.9{\pm}4.8$ & $62.6{\pm}4.1$ & $\mathbf{63.3{\pm}3.1}$ \\
            $25$ & $68.1{\pm}4.0$ & $66.9{\pm}0.8$ & $62.3{\pm}5.1$ & $\mathbf{71.6{\pm}2.3}$ \\
            $50$ & $\mathbf{70.9{\pm}4.0}$ & $68.7{\pm}1.2$ & $62.0{\pm}7.0$ & $\mathbf{70.9{\pm}4.0}$ \\
            $75$ & $\mathbf{67.4{\pm}1.3}$ & $\mathbf{67.4{\pm}2.1}$ & $\mathbf{67.4{\pm}1.3}$ & $\mathbf{67.4{\pm}1.3}$ \\
            \midrule
            \multicolumn{5}{l}{\texttt{StackCube}} \\
            $5$  & $37.1{\pm}3.3$ & $36.1{\pm}5.2$ & $39.3{\pm}5.4$ & $\mathbf{39.9{\pm}4.0}$ \\
            $10$ & $36.1{\pm}2.2$ & $37.1{\pm}4.7$ & $36.0{\pm}0.9$ & $\mathbf{42.3{\pm}4.4}$ \\
            $15$ & $39.5{\pm}4.9$ & $39.4{\pm}1.1$ & $40.0{\pm}0.3$ & $\mathbf{43.2{\pm}2.8}$ \\
            $25$ & $40.9{\pm}2.5$ & $40.9{\pm}2.1$ & $\mathbf{42.3{\pm}1.3}$ & $42.1{\pm}2.1$ \\
            $50$ & $44.0{\pm}1.5$ & $41.4{\pm}3.5$ & $43.6{\pm}2.9$ & $\mathbf{45.5{\pm}1.5}$ \\
            $75$ & $\mathbf{42.6{\pm}2.9}$ & $40.9{\pm}3.3$ & $\mathbf{42.6{\pm}2.9}$ & $\mathbf{42.6{\pm}2.9}$ \\
            \midrule
            \multicolumn{5}{l}{\texttt{CupCake}} \\
            $5$  & $50.4{\pm}1.8$ & $51.6{\pm}3.0$ & $51.4{\pm}1.8$ & $\mathbf{52.5{\pm}3.3}$ \\
            $10$ & $46.7{\pm}2.1$ & $48.5{\pm}3.6$ & $50.6{\pm}1.3$ & $\mathbf{52.2{\pm}4.5}$ \\
            $15$ & $53.0{\pm}2.3$ & $47.7{\pm}2.6$ & $53.1{\pm}2.9$ & $\mathbf{54.1{\pm}2.6}$ \\
            $25$ & $54.3{\pm}1.7$ & $49.3{\pm}2.1$ & $50.7{\pm}2.4$ & $\mathbf{54.5{\pm}3.3}$ \\
            $50$ & $52.2{\pm}4.0$ & $\mathbf{52.4{\pm}2.5}$ & $52.2{\pm}4.0$ & $52.2{\pm}4.0$ \\
            $75$ & $\mathbf{49.5{\pm}3.5}$ & $45.5{\pm}2.4$ & $\mathbf{49.5{\pm}3.5}$ & $\mathbf{49.5{\pm}3.5}$ \\
            \bottomrule
        \end{tabular}%
        }
    \end{minipage}\hspace{0.012\textwidth}%
    \begin{minipage}[t]{0.475\textwidth}
        \vspace{2pt}
        \raggedright
        \includegraphics[width=0.92\linewidth]{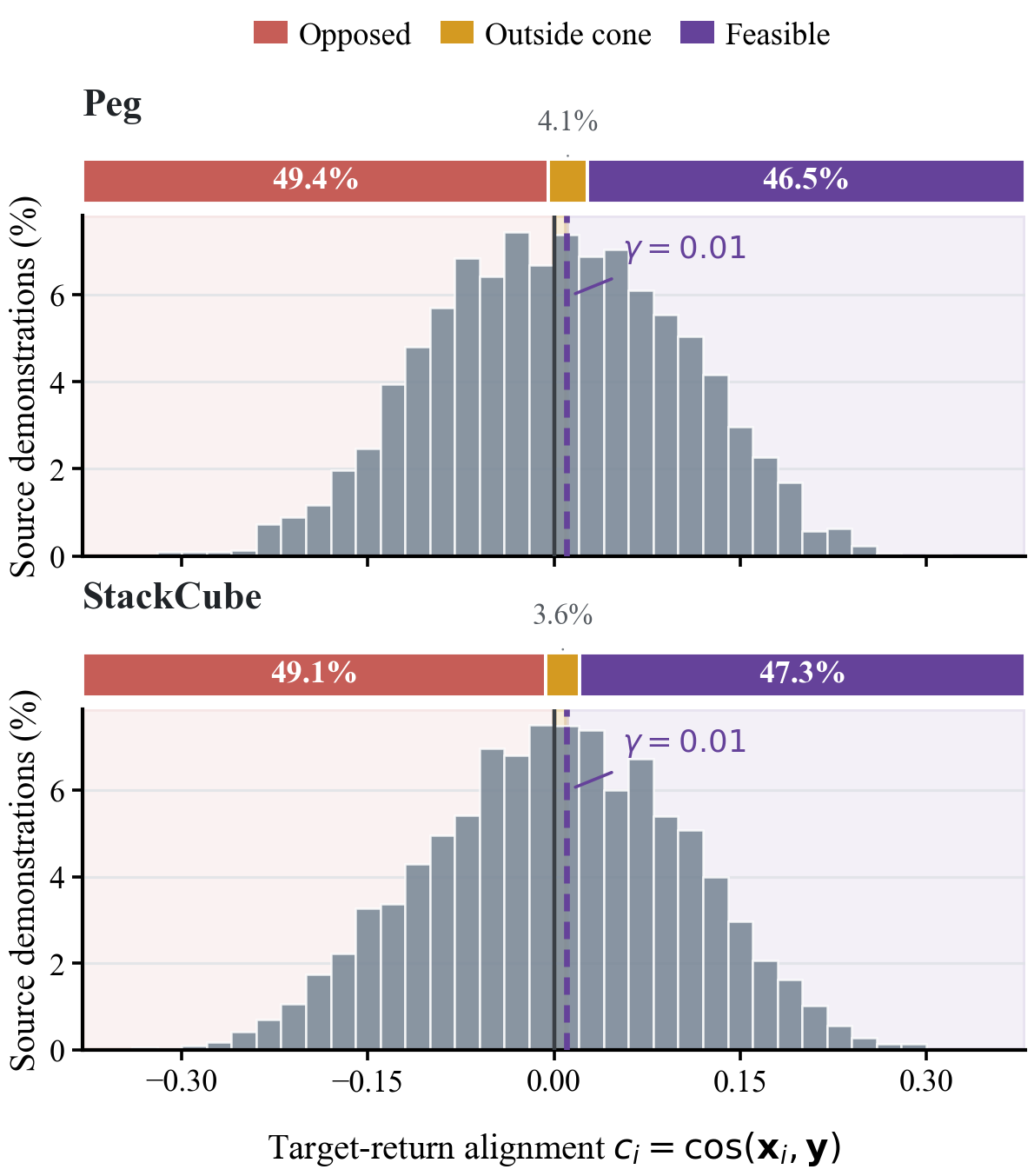}
        \captionof{figure}{Cone partitions for \texttt{Peg} and \texttt{StackCube}. Bars show region proportions; histograms show alignment $c_i$ at $\gamma=0.01$.}
        \label{fig:cone-distribution}
    \end{minipage}
\end{figure}

\section{Discussions}

We answer the questions posed in Section~\ref{sec:experiments} and discuss their implications for robot data curation.

\textbf{Q1: Curation across domain gaps.}
Our study shows that a single TUCO curation mechanism remains effective from single-simulator selection to Sim-to-Sim and Sim-to-Real co-training. This finding yields two insights into robust curation. First, demonstrations should be evaluated through their effects on target-policy closed-loop performance, since criteria based on latent structure or proxy objectives can capture domain-specific variation whose relevance changes across environments. Second, curation should be treated as set construction because a demonstration's value depends on the target data and the behaviors already represented by the selected subset. TUCO realizes both principles by measuring trajectory-level utility and set-level coverage from shared target-domain scoring rollouts, which explains its consistency as the domain gap changes.

\textbf{Q2: Active curation for sim-to-real co-training.}
To the best of our knowledge, our study provides the first systematic evidence that active curation benefits both Sim-to-Sim and Sim-to-Real co-training, and leads to a new insight: co-training should optimize which source demonstrations to mix in addition to how much source data to mix. Curated subsets outperform co-training with the full source pool,
while random subsets of the same size do not recover the same benefit. With the target demonstrations also held fixed, this comparison identifies source-data composition as a consequential choice beyond the relative amounts of source and target data. Active curation
therefore helps convert a broad simulation pool into target-relevant supervision by selecting useful and complementary demonstrations.


\section{Conclusions}
In this work, we studied simulation data curation for robot policy co-training and presented a systematic evaluation of its value in Sim-to-Real co-training. We introduced TUCO, a set-level curation framework that grounds source-data selection in target-policy closed-loop behavior. We used influence functions to trace each source demonstration to its effects on target-domain scoring rollouts, then decomposed these effects to measure trajectory-level utility and set-level coverage within a shared closed-loop space. We further developed a performance-aligned subset optimizer that combines both signals to suppress redundancy and retain complementary demonstrations. Across RoboMimic and OmniReset, TUCO achieves the best or tied-best performance among the compared methods across Single-Simulator, Sim-to-Sim, and Sim-to-Real settings.   More importantly, our experiments show that active curation improves cross-domain co-training over using the full source pool and outperforms random selection at matched data budgets, establishing the value of choosing which simulation demonstrations to mix.


\bibliography{references}
\bibliographystyle{references}

\clearpage
\appendix

\section*{Appendix Contents}
{
\hypersetup{linkcolor=TUCOSectionLink}
\setcounter{tocdepth}{2}
\startcontents[appendix]
\printcontents[appendix]{}{1}{}
}
\clearpage
\hypersetup{
  linkcolor=black,
  citecolor=TUCOCitationLink,
  urlcolor=black
}

\input{appendix}

\end{document}

%% file: math_commands.tex
\usepackage{amsmath,amsfonts,bm}

\def\eqref#1{equation~\ref{#1}}

\def\1{\bm{1}}

\DeclareMathAlphabet{\mathsfit}{\encodingdefault}{\sfdefault}{m}{sl}
\SetMathAlphabet{\mathsfit}{bold}{\encodingdefault}{\sfdefault}{bx}{n}

\DeclareMathOperator*{\argmax}{arg\,max}

%% file: authors.tex
\author[1,*]{Ning Zhu}
\author[2,3,*]{Mengfei Zhao}
\author[4,*]{Yikai Tang}
\author[5]{Zhangyujie Sun}
\author[6]{Peihao Li}
\author[7]{Dongyue Ni}
\author[2,\dagger]{Jindou Jia}
\author[2,\dagger]{Jianfei Yang}

\affiliation[1]{Stanford University}
\affiliation[2]{Nanyang Technological University}
\affiliation[3]{AXIS Robotics}
\affiliation[4]{Carnegie Mellon University}
\affiliation[5]{Fudan University}
\affiliation[6]{University of California, Berkeley}
\affiliation[7]{Shanghai Jiao Tong University}
\contribution[*]{Equal contribution}
\contribution[\dagger]{Corresponding authors}

%% file: appendix.tex
\section{Extended Related Work}
\label{app:related-work}
\subsection{Data Curation for Robot Learning}
Robot data curation methods can be categorized into three groups. \textbf{Quality-based approaches} estimate value from demonstration properties. Existing studies analyze consistency and heterogeneity~\citep{mandlekar2022whatmatters,belkhale2023dataquality}, demonstrator compatibility~\citep{gandhi2023compatible,kuhar2023discern}, and collection modality~\citep{li2025modality}. Scalable criteria use state-action dependence~\citep{hejna2025deminf} or learned task progress~\citep{zhang2026scizor,budzianowski2025opengvl}, while temporal and motion cues offer policy-free alternatives~\citep{kumar2025pacer,sojib2026efficient,kulkarni2026rinse}. Recent extensions operate on frames and action segments~\citep{yu2026frameskip,yu2026warp}, adapt temporal sampling~\citep{yang2026standardization}, and evaluate failure data~\citep{miyamoto2026failure}. Other criteria verify synthetic actions~\citep{kim2026robocurate}, focus on contact-sensitive phases~\citep{sathyanarayan2025contact}, or assess the behavioral consequences of quality metrics~\citep{bedi2026metrics}. \textbf{Utility-based approaches} condition value on target data or policy behavior. Retrieval methods match skill and state-action structure~\citep{nasiriany2023sailor,du2023behavior}, often using visual or motion features for cross-task transfer~\citep{lin2025flowretrieval,memmel2025strap}. Density correction~\citep{xie2025importance}, cue fusion~\citep{kumar2025collage}, and embodiment alignment~\citep{lin2026simdex} support heterogeneous sources. Policy-conditioned methods use rollout outcomes~\citep{chen2025demoscore} or learned datamodels~\citep{dass2026datamil}. Influence functions~\citep{koh2017influence} connect training data to imitation loss~\citep{lee2026quality} and closed-loop return~\citep{agia2025cupid,xu2026athena}, while mixture optimization estimates utility over source domains~\citep{hejna2025remix,sima2026promix}. \textbf{Diversity-based approaches} suppress representation or trajectory redundancy~\citep{zhang2026scizor,sirigiri2026faktual}. Distribution matching~\citep{feng2025tarot} and primitive composition~\citep{wu2026sieve} pursue broader coverage. This set-level view is supported by studies of behavioral multimodality~\citep{jia2024diverse}, task and embodiment diversity~\citep{shi2025diversity}, and dataset scaling~\citep{lin2025scalinglaws,saxena2025largedatasets}. Complementary analyses examine learnability~\citep{xiao2025assessment} and fragmentation~\citep{xing2025shortcut} in generalist robot data.

\subsection{Cross-Domain Robot Co-Training}
Simulation platforms provide scalable source experience for robot policy learning~\citep{mandlekar2023mimicgen,nasiriany2024robocasa}. Cross-domain co-training can be organized by how this experience is composed, aligned, and used during optimization. \textbf{Composition-oriented studies} examine simulation scale and fixed sim-to-real mixing ratios~\citep{wei2025empirical,maddukuri2025simreal}, with heterogeneous variants combining predefined modalities or embodiments~\citep{lin2026modalities,yang2026analogies}. \textbf{Alignment-oriented methods} improve source compatibility through observation-action alignment~\citep{cheng2025domainadaptation}, visual invariance~\citep{yang2025invariance}, or complementary simulation and human priors~\citep{fang2026simhuman}. Mechanistic analyses further connect co-training gains to representation alignment and domain-dependent action weighting~\citep{lei2026mechanistic}. \textbf{Utilization-oriented methods} learn weights over predefined source domains~\citep{hejna2025remix,sima2026promix}, schedule lower-quality data during diffusion training~\citep{wei2026ambient}, or combine simulation reinforcement learning with real demonstrations~\citep{shi2026rlco}. These approaches establish the importance of source composition and training strategy. However, sim-to-real co-training lacks a curation mechanism that actively selects simulation demonstrations according to their contribution to real-world policy learning.

\subsection{Active Learning}
\label{app:active-learning}
We also draw inspiration from the literature on active learning, which selects examples for annotation to improve model performance under a limited labeling budget. Diversity is an important selection principle for reducing redundant queries. \textbf{Representation-based approaches} construct representative subsets through geometric coverage~\citep{sener2018coreset,yehuda2022probcover,bae2024maxherding} or combine distribution matching with diversity during pretrained-model adaptation~\citep{xie2023activeft}. In cold-start settings, where little or no labeled data is available, typical examples from distinct clusters~\citep{hacohen2022typiclust} and self-supervised surprisal embeddings~\citep{yuan2020alps} support acquisition without a trained task-specific model. Recent methods further use self-supervised or pretrained uncertainty to guide diverse sampling and feature weighting~\citep{zhu2025csal3d,ma2025sugfw,ma2026sugfwplus}, while prediction-based and language-model-guided acquisition~\citep{guo2024deuce,bayer2026activellm} incorporate task information into cold-start selection. Benchmark evidence~\citep{zhu2025medcal} highlights the dependence of acquisition performance on both the feature extractor and the downstream task.

\textbf{Model-informed approaches} connect batch composition to learning value through joint information gain~\citep{kirsch2019batchbald} or gradient and Fisher embeddings~\citep{ash2020badge,ash2021bait}. Coverage can also be weighted by uncertainty or balanced with prediction margins as model competence changes~\citep{bae2025uherding,mishal2024dcom}. This connection to the learner extends to prediction-oriented acquisition, which targets information about deployment predictions~\citep{bickfordsmith2023epig} and expert behavior in policy fine-tuning~\citep{bagatella2025active}. These studies motivate relating subset diversity to downstream learning objectives. TUCO follows this principle by jointly measuring trajectory-level utility and set-level coverage through estimated effects on target-domain closed-loop executions.

\section{Theoretical Details}
\label{app:theory}
\label{app:method-details}

\paragraph{Roadmap.}
We derive closed-loop return influence for a target-trained base policy in Appendix~\appsectionref{app:influence-derivation} and describe practical scoring with both newly collected and recorded rollouts in Appendix~\appsectionref{app:influence-estimation}. Appendix~\appsectionref{app:subset-optimization-details} establishes the decomposition and coverage geometry, followed by a complexity analysis in Appendix~\appsectionref{app:computational-complexity}.

\subsection{Closed-Loop Performance Influence}
\label{app:influence-derivation}

We justify the return-influence interpretation of the rollout-wise profiles and utility scores in Section~\appsectionref{sec:closed-loop-attribution} for a base policy trained on the target data. We use the standard influence-function result~\citep{koh2017influence} and CUPID's closed-loop attribution argument~\citep{agia2025cupid} in our source-to-target notation.

\paragraph{Conditions.}
We assume that $\theta_0$ is an interior local minimizer of $\mathcal L_{\mathrm t}$ with nonsingular Hessian $\mathbf H_0$, and that both $\mathcal L_{\mathrm t}$ and $\ell_\xi(\xi_i^{\mathrm s}\mid\theta)$ are twice continuously differentiable near $\theta_0$. The target environment and return function are independent of $\theta$. The rollout density is differentiable with locally fixed support, and differentiation can be exchanged with integration. The return and return-weighted rollout score are integrable.

\begin{theorem}[Return-Weighted Performance Influence]
\label{thm:return-weighted-influence}
Under the conditions above, let $\theta_{\epsilon,i}$ be the local minimizer of $\mathcal L_{\mathrm t}(\theta)+\epsilon\ell_\xi(\xi_i^{\mathrm s}\mid\theta)$ continuing from $\theta_{0,i}=\theta_0$. Define $\mathbf g_i:=\nabla_\theta\ell_\xi(\xi_i^{\mathrm s}\mid\theta_0)$ and $\mathbf h_r:=\left.\nabla_\theta\log p_{\mathrm t}(\tau_r\mid\pi_\theta)\right|_{\theta=\theta_0}$. For a target rollout in the support of the base policy,
\begin{equation}
\begin{aligned}
X_{ri}
&:=\left.\partial_\epsilon\log p_{\mathrm t}(\tau_r\mid\pi_{\theta_{\epsilon,i}})\right|_{\epsilon=0}\\
&=-\mathbf h_r^\top\mathbf H_0^{-1}\mathbf g_i.
\end{aligned}
\label{eq:app-rollout-wise-influence}
\end{equation}
With $y_r:=R_{\mathrm t}(\tau_r)$, its return-weighted expectation gives
\begin{equation}
\begin{aligned}
\mathcal I_i^{\mathrm{perf}}
&:=\left.\partial_\epsilon J_{\mathrm t}(\theta_{\epsilon,i})\right|_{\epsilon=0}\\
&=\mathbb E_{\tau_r\sim p_{\mathrm t}(\cdot\mid\pi_{\theta_0})}[y_rX_{ri}].
\end{aligned}
\label{eq:app-return-weighted-influence}
\end{equation}
\end{theorem}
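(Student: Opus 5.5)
The plan has two stages. First, the implicit function theorem gives the parameter derivative $\partial_\epsilon\theta_{\epsilon,i}|_{\epsilon=0}$. Second, this derivative is pushed through the rollout log-likelihood, which yields \eqref{eq:app-rollout-wise-influence}, and through the expected return via the likelihood-ratio identity, which yields \eqref{eq:app-return-weighted-influence}.

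\textbf{Step 1: parameter response.} Define $\Phi(\theta,\epsilon):=\nabla_\theta\mathcal L_{\mathrm t}(\theta)+\epsilon\nabla_\theta\ell_\xi(\xi_i^{\mathrm s}\mid\theta)$.
\begin{itemize}
\item Since $\theta_0$ is an interior local minimizer, $\Phi(\theta_0,0)=0$.
\item $\Phi$ is $C^1$ near $(\theta_0,0)$ by the twice continuous differentiability assumption.
\item $\partial_\theta\Phi(\theta_0,0)=\mathbf H_0$ is nonsingular.
\end{itemize}
The implicit function theorem therefore gives a unique $C^1$ curve $\epsilon\mapsto\theta_{\epsilon,i}$ of stationary points with $\theta_{0,i}=\theta_0$. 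These stationary points are local minimizers for small $\epsilon$: $\mathbf H_0$ is positive definite at a nondegenerate minimizer, and positive definiteness persists by continuity. Differentiating $\Phi(\theta_{\epsilon,i},\epsilon)=0$ at $\epsilon=0$ gives $\mathbf H_0\,\dot\theta+\mathbf g_i=0$, that is, $\dot\theta:=\partial_\epsilon\theta_{\epsilon,i}|_{\epsilon=0}=-\mathbf H_0^{-1}\mathbf g_i$.

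\textbf{Step 2: rollout-wise influence.} Fix $\tau_r$ in the support of $p_{\mathrm t}(\cdot\mid\pi_{\theta_0})$. Because the support is locally fixed, $p_{\mathrm t}(\tau_r\mid\pi_\theta)>0$ for $\theta$ near $\theta_0$, so $\log p_{\mathrm t}(\tau_r\mid\pi_\theta)$ is differentiable there. The chain rule then gives $X_{ri}=\mathbf h_r^\top\dot\theta=-\mathbf h_r^\top\mathbf H_0^{-1}\mathbf g_i$. This is \eqref{eq:app-rollout-wise-influence}.

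\textbf{Step 3: performance influence.} The environment and $R_{\mathrm t}$ do not depend on $\theta$, so we can write $J_{\mathrm t}(\theta)=\int R_{\mathrm t}(\tau)\,p_{\mathrm t}(\tau\mid\pi_\theta)\,d\tau$. By the chain rule, $\partial_\epsilon J_{\mathrm t}(\theta_{\epsilon,i})|_{0}=\nabla_\theta J_{\mathrm t}(\theta_0)^\top\dot\theta$. We exchange derivative and integral, which is assumed permissible and is justified by integrability of the return-weighted score. Writing $\nabla_\theta p=p\,\nabla_\theta\log p$ on the fixed support gives
\[
\nabla_\theta J_{\mathrm t}(\theta_0)=\mathbb E_{\tau\sim p_{\mathrm t}(\cdot\mid\pi_{\theta_0})}\!\left[R_{\mathrm t}(\tau)\,\nabla_\theta\log p_{\mathrm t}(\tau\mid\pi_\theta)|_{\theta_0}\right].
\]
We pair this with $\dot\theta$ and pull the constant vector inside the expectation, using linearity and integrability. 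This gives $\mathbb E[y_r\,\mathbf h_r^\top\dot\theta]=\mathbb E[y_rX_{ri}]$, where Step 2 is applied pointwise on the support, which carries full measure. This is \eqref{eq:app-return-weighted-influence}.

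\textbf{Main obstacle.} The delicate step is Step 3. It needs care on three points:
\begin{itemize}
\item The chain rule through $J_{\mathrm t}$ requires $J_{\mathrm t}$ to be differentiable at $\theta_0$, and not just directionally differentiable along the curve. I will get this from the interchange assumption, interpreted as dominated convergence for $\nabla_\theta p$ in a neighborhood of $\theta_0$.
\item The fixed support is what makes the score function $\nabla_\theta\log p$ well defined.
\item The environment's transition densities are $\theta$-independent. As a result, $\mathbf h_r$ reduces to the sum of policy log-likelihood gradients along $\tau_r$, although the statement itself does not require this reduction.
\end{itemize}
If full differentiability is not available, the fallback is to apply the interchange directly to $\epsilon\mapsto\int R_{\mathrm t}\,p_{\mathrm t}(\tau\mid\pi_{\theta_{\epsilon,i}})$, differentiating under the integral in the scalar $\epsilon$.
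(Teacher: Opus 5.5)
Your proposal is correct and follows the paper's route. The paper's proof obtains the parameter response $-\mathbf H_0^{-1}\mathbf g_i$ by differentiating the stationarity condition, applies the chain rule to the fixed rollout's log-likelihood, and combines the likelihood-ratio identity $\nabla_\theta J_{\mathrm t}(\theta_0)=\mathbb E[y_r\mathbf h_r]$ with that response; your implicit-function-theorem argument for the continued local-minimizer path and your attention to the differentiability of $J_{\mathrm t}$ make explicit what the paper takes for granted.
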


\begin{proof}
Differentiating the stationarity condition for the perturbed objective gives the standard parameter response~\citep{koh2017influence}:
\begin{equation}
\left.\partial_\epsilon\theta_{\epsilon,i}\right|_{\epsilon=0}
=-\mathbf H_0^{-1}\mathbf g_i.
\label{eq:app-parameter-response}
\end{equation}
The environment factors in the rollout likelihood are independent of $\theta$, so
\begin{equation}
\mathbf h_r
=\sum_{h=0}^{H_r-1}
\left.\nabla_\theta\log\pi_\theta(a_{r,h}\mid o_{r,h})\right|_{\theta=\theta_0}.
\label{eq:app-rollout-score}
\end{equation}
Applying the chain rule to the log likelihood of a fixed rollout then yields Equation~\appequationref{eq:app-rollout-wise-influence}.

For expected return, the likelihood-ratio identity gives
$\nabla_\theta J_{\mathrm t}(\theta_0)=\mathbb E_{\tau_r}[y_r\mathbf h_r]$.
Combining it with the parameter response yields
\begin{equation}
\begin{aligned}
\mathcal I_i^{\mathrm{perf}}
&=\nabla_\theta J_{\mathrm t}(\theta_0)^\top
\left.\partial_\epsilon\theta_{\epsilon,i}\right|_{\epsilon=0}\\
&=-\mathbb E_{\tau_r}[y_r\mathbf h_r]^\top\mathbf H_0^{-1}\mathbf g_i\\
&=\mathbb E_{\tau_r}[y_rX_{ri}],
\end{aligned}
\label{eq:app-performance-chain-rule}
\end{equation}
where $\mathbf H_0$ and $\mathbf g_i$ are fixed under the rollout expectation.
\end{proof}


By Theorem~\apptheoremref{thm:return-weighted-influence}, $q_i/M=M^{-1}\sum_{r=1}^{M}y_rX_{ri}$ is an unbiased estimator of local return influence when the theorem's conditions hold, profile entries are exact, and the scoring rollouts are independently sampled from $p_{\mathrm t}(\cdot\mid\pi_{\theta_0})$. With returns $+1$ for success and $-1$ for failure, $J_{\mathrm t}=2P_{\mathrm{success}}-1$, so $q_i/(2M)$ estimates the corresponding first-order change in success probability. The next subsection describes the practical estimator and the interpretation of the success-only feedback used in Sim-to-Real.

\subsection{Practical Influence Estimation}
\label{app:influence-estimation}

We follow CUPID~\citep{agia2025cupid} to estimate influence profiles using projected loss gradients. We evaluate all gradients at the fixed base policy $\theta_0$ and use the same target-domain scoring rollouts for every source candidate. In the simulated settings, the base policy is trained on target demonstrations. For Sim-to-Real, it is pretrained on simulation data and supplies recorded successful real-world rollouts.

\paragraph{Trajectory aggregation.}
Let $\mathbf e_{r,h}:=\nabla_\theta\ell(o_{r,h},a_{r,h}\mid\theta_0)$ and $\mathbf g_{i,k}:=\nabla_\theta\ell(o_{i,k}^{\mathrm s},a_{i,k}^{\mathrm s}\mid\theta_0)$ denote target-rollout and source-demonstration loss gradients. For a negative log-likelihood loss, $\mathbf h_r=-\sum_h\mathbf e_{r,h}$ and $\mathbf g_i=\sum_k\mathbf g_{i,k}$. Thus,
\begin{equation}
X_{ri}
=\sum_{h=0}^{H_r-1}\sum_{k=0}^{T_i-1}
\mathbf e_{r,h}^\top\mathbf H_0^{-1}\mathbf g_{i,k}.
\label{eq:app-action-factorization}
\end{equation}
For a deterministic MLP, we use mean-squared behavior-cloning loss as a surrogate. This loss corresponds to Gaussian negative log likelihood up to a positive scale and an additive constant when the action covariance is fixed and isotropic.

\paragraph{Projected estimator.}
Let $\mathbf P\in\mathbb R^{d_\theta\times d}$ be a shared random projection from the $d_\theta$ policy parameters to $d$ dimensions. We collect the projected gradients of source training examples as rows of $\widetilde{\mathbf G}\in\mathbb R^{L_{\mathrm s}\times d}$, where $L_{\mathrm s}$ counts individual transitions for the state policy or observation--action windows for Diffusion Policy, and define

\begin{equation}
\widetilde{\mathbf C}
:=\widetilde{\mathbf G}^\top\widetilde{\mathbf G}+\eta\mathbf I_d,
\qquad \eta\geq0.
\label{eq:app-projected-curvature}
\end{equation}
We require $\widetilde{\mathbf C}$ to be invertible. This source-gradient Gram matrix provides the curvature approximation used for scoring at the frozen base policy. In the exact target-trained derivation, $\mathbf H_0$ is the Hessian of the target training loss. With $\widetilde{\mathbf e}_r:=\sum_h\mathbf P^\top\mathbf e_{r,h}$ and $\widetilde{\mathbf g}_i:=\sum_k\mathbf P^\top\mathbf g_{i,k}$, the estimated profile entry is
\begin{equation}
\widehat X_{ri}
=\widetilde{\mathbf e}_r^\top\widetilde{\mathbf C}^{-1}\widetilde{\mathbf g}_i.
\label{eq:app-projected-influence}
\end{equation}
Bilinearity allows trajectory gradients to be summed before scoring, preserving the frame-pair aggregation in Equation~\appequationref{eq:app-action-factorization} under this curvature approximation. We use these estimated profiles as $\mathbf X$ in TUCO's decomposition and subset selection.

\subsection{Influence Decomposition and Subset Geometry}
\label{app:subset-optimization-details}
\label{app:subset-computation}

We establish the utility-preserving decomposition in Section~\appsectionref{sec:influence-decomposition} and the volume interpretation and marginal gain of the coverage criterion in Section~\appsectionref{sec:subset-optimization}. The following identities hold for any fixed influence profiles and $\mathbf y\neq\mathbf0$, including the estimated profiles constructed from recorded successful rollouts.

\paragraph{Decomposition and performance alignment.}
Recall $\mathbf x_i=u_i\widehat{\mathbf y}+\mathbf z_i$ from Equation~\appequationref{eq:influence-decomposition}. Since $\mathbf z_i$ is orthogonal to the unit vector $\widehat{\mathbf y}$,
\begin{equation}
\begin{aligned}
\widehat{\mathbf y}^\top\mathbf z_i&=0,\\
q_i&=\|\mathbf y\|_2u_i,\\
\|\mathbf x_i\|_2^2&=u_i^2+\|\mathbf z_i\|_2^2.
\end{aligned}
\label{eq:app-decomposition-identities}
\end{equation}
The decomposition therefore preserves utility. When $q_i=q_j$, we have $\mathbf x_i-\mathbf x_j=\mathbf z_i-\mathbf z_j$, so the residuals retain all differences between profiles with equal utility. For $q_i>0$, the alignment constraint in Equation~\appequationref{eq:feasible-cone} becomes
\begin{equation}
\frac{u_i}{\sqrt{u_i^2+\|\mathbf z_i\|_2^2}}\geq\gamma.
\label{eq:app-cone-decomposition}
\end{equation}
For $\gamma\in(0,1)$, this is equivalent to $\|\mathbf z_i\|_2\leq u_i\sqrt{\gamma^{-2}-1}$. The cone thus bounds residual magnitude relative to the positive return-aligned component before optimizing directional coverage. At $\gamma=0$, it reduces to the positive-utility half-space.

For success-only feedback, $\mathbf y=\mathbf1\in\mathbb R^M$ and $\widehat{\mathbf y}=\mathbf1/\sqrt M$. The decomposition becomes
\begin{equation}
q_i=\sum_{r=1}^{M}X_{ri},\qquad
u_i=\frac{q_i}{\sqrt M},\qquad
\mathbf z_i=\mathbf x_i-\frac{q_i}{M}\mathbf1.
\label{eq:app-success-only-decomposition}
\end{equation}
Thus, utility measures aggregate support across the successful scoring rollouts, while the residual records variation around that profile's mean influence. Even with identical success labels, the residuals retain differences across executions that can support complementary coverage. The same cone constraint and subset optimizer apply to these profiles.

\paragraph{Spanning-volume interpretation.}
For a nonempty feasible pool, use the positive weights $w_i=q_i/\kappa_{\mathcal F}$ from the main text. Normalize nonzero residuals to $\mathbf v_i=\mathbf z_i/\|\mathbf z_i\|_2$ and set $\mathbf v_i=\mathbf0$ otherwise. For $S\subseteq\mathcal F_\gamma$, define
\begin{equation}
\begin{aligned}
\mathbf V_S&:=[\sqrt{\rho w_i}\mathbf v_i]_{i\in S},\\
\mathbf M(S)&:=\mathbf I_M+\mathbf V_S\mathbf V_S^\top.
\end{aligned}
\end{equation}
Because $\mathbf M(S)\succeq\mathbf I_M$, the coverage score $G(S)=\log\det\mathbf M(S)$ is well defined for every subset, including the empty set. Sylvester's determinant identity and the principal-minor expansion give
\begin{equation}
\begin{aligned}
\exp G(S)
&=\det\!\left(\mathbf I_{|S|}+\mathbf V_S^\top\mathbf V_S\right)\\
&=\sum_{T\subseteq S}\rho^{|T|}
\left(\prod_{i\in T}w_i\right)
\det\!\left([\mathbf v_i^\top\mathbf v_j]_{i,j\in T}\right).
\end{aligned}
\label{eq:app-spanning-volume}
\end{equation}
The empty-set term is one. Each Gram determinant equals the squared parallelotope volume spanned by its residual vectors and vanishes for linearly dependent vectors. Thus, $\exp G(S)$ aggregates utility-weighted squared spanning volumes in the subspace orthogonal to $\mathbf y$. This gives a precise interpretation of the coverage intuition in the main text. The outer product $\mathbf v_i\mathbf v_i^\top$ treats opposite residual directions as the same span. A zero residual contributes no coverage while remaining eligible if it satisfies the cone constraint.

\paragraph{Marginal coverage.}
Adding $i\in\mathcal F_\gamma\setminus S$ updates the coverage matrix by $\rho w_i\mathbf v_i\mathbf v_i^\top$. Applying the matrix determinant lemma gives
\begin{equation}
\begin{aligned}
\Delta_i(S)
&=\log\frac{\det(\mathbf M(S)+\rho w_i\mathbf v_i\mathbf v_i^\top)}
{\det\mathbf M(S)}\\
&=\log\!\left(1+\rho w_i\mathbf v_i^\top\mathbf M(S)^{-1}\mathbf v_i\right).
\end{aligned}
\label{eq:app-logdet-marginal}
\end{equation}
This is the marginal gain used in Algorithm~\ref{alg:subset-selection}. To see how it accounts for existing coverage, write $\mathbf M(S)=\mathbf Q\operatorname{diag}(d_1,\ldots,d_M)\mathbf Q^\top$ in an orthonormal eigenbasis. Then
\begin{equation}
\mathbf v_i^\top\mathbf M(S)^{-1}\mathbf v_i
=\sum_{k=1}^{M}\frac{(\mathbf Q_{:,k}^\top\mathbf v_i)^2}{d_k},
\qquad d_k\geq1.
\end{equation}
A larger $d_k$ indicates greater weighted support along that direction in $S$. Its inverse reduces the contribution of an already covered direction, favoring additional coverage for candidates with comparable utility. For $S\subseteq T\subseteq\mathcal F_\gamma$, we have $\mathbf M(T)\succeq\mathbf M(S)\succ0$, hence $\mathbf M(T)^{-1}\preceq\mathbf M(S)^{-1}$. Equation~\appequationref{eq:app-logdet-marginal} therefore implies $0\leq\Delta_i(T)\leq\Delta_i(S)$ for $i\in\mathcal F_\gamma\setminus T$.

\paragraph{Selection and budget completion.}
Equation~\appequationref{eq:greedy-rule} combines fixed utility scores with marginal coverage that changes as $S$ grows. Utility is standardized over the feasible pool, while coverage gains are standardized over the remaining candidates at each iteration. A zero variance gives zero standardized scores. If $K>|\mathcal F_\gamma|$, all feasible candidates are retained and the remaining slots are filled by the highest-utility unselected trajectories. An empty feasible pool therefore yields utility-only selection. Once the budget includes the entire feasible pool, the selected set is independent of $\lambda$.

\subsection{Computational Complexity}
\label{app:computational-complexity}

Given $\mathbf X\in\mathbb R^{M\times N}$, decomposition and cone filtering take $O(NM)$ time. Let $F:=|\mathcal F_\gamma|$ and $K_{\mathcal F}:=\min(K,F)$. Evaluating dense quadratic forms across $K_{\mathcal F}$ greedy iterations costs $O(K_{\mathcal F}FM^2)$, with $O(M^2)$ per rank-one inverse update. Utility sorting for budget completion costs $O(N\log N)$, and memory usage is $O(NM+M^2)$. Computing the complete feasible ordering once for reuse across budgets takes $O(F^2M^2)$ time. These costs cover subset selection after influence estimation.

\section{Additional Experimental Details}
\label{app:experimental-details}

\subsection{Tasks and Datasets}
\label{app:tasks-datasets}

We adopt the single-simulator protocol of CUPID~\citep{agia2025cupid} on the low-dimensional Lift, Square, and Transport tasks from RoboMimic~\citep{mandlekar2022whatmatters}. We follow its data splits, validation protocol, curation budgets, and task-specific training schedules. Filtering is evaluated on all three tasks, while selection is evaluated on Square and Transport. 

The Sim-to-Sim and Sim-to-Real study uses \texttt{Peg}, \texttt{StackCube}, and \texttt{CupCake} from OmniReset~\citep{yin2026omnireset}. For Sim-to-Sim, MuJoCo serves as the target domain and IsaacSim supplies the source demonstrations. We retain the complete target set and curate source trajectories at the budgets defined in Section~\appsectionref{sec:experiments}. The resulting subset is combined with the target demonstrations, and policy performance is evaluated in the target environment. Figure~\ref{fig:app-sim-task-trajectories} illustrates the simulated task executions.

\begin{figure}[tbp]
    \centering
    \includegraphics[width=\textwidth]{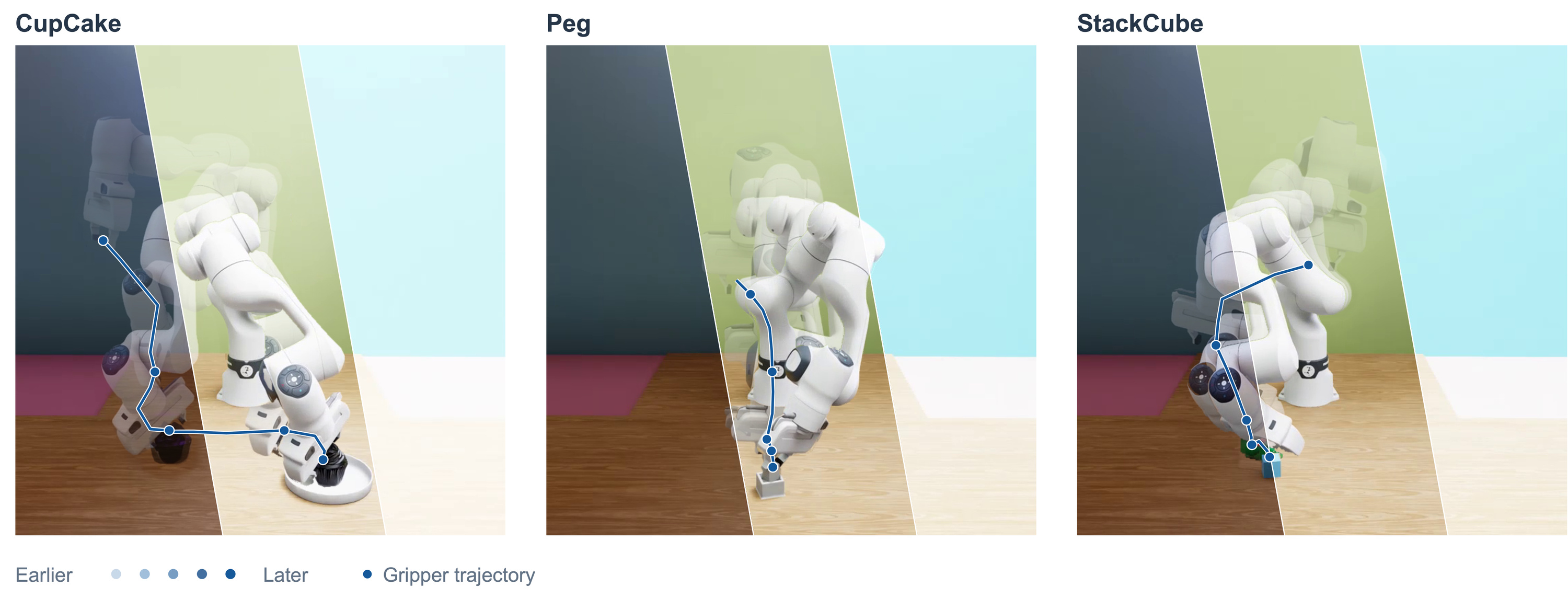}
    \caption{Example simulated executions of \texttt{CupCake}, \texttt{Peg}, and \texttt{StackCube}. Overlaid frames show successive stages of each task, with blue paths tracing gripper motion.}
    \label{fig:app-sim-task-trajectories}
\end{figure}

For each task of the Sim-to-Real study, we select 600 demonstrations from a source pool of 1,200 simulation demonstrations. The selected source demonstrations are then combined with real-world demonstrations for co-training. During mixed-domain training, real and simulation samples account for 20\% and 80\% of the sampling distribution, respectively. Real-world training samples are aligned with individual policy decisions by pairing the observations used for a decision with the action commands actually issued. We pad these samples to the policy prediction horizon and exclude padded action positions from the training loss. 

\subsection{Simulation Policy Training Details}
\label{app:policy-training}

The single-simulator experiments use the convolutional Diffusion Policy configuration released with CUPID~\citep{agia2025cupid}. The policy has a prediction horizon of 16 with two observation steps and eight action steps. Each curated subset is trained from scratch. Methods compared at the same task and budget share the policy configuration, training schedule, and normalization procedure.

The Sim-to-Sim experiments use a state-based behavior-cloning policy that takes two consecutive 200-dimensional observations. The resulting 400-dimensional input is processed by an MLP with widths $400\rightarrow1024\rightarrow512\rightarrow512\rightarrow512\rightarrow7$ and ReLU activations. We minimize mean-squared action error for 50,000 gradient steps using AdamW. The batch size is 256 and the learning rate is $10^{-4}$. AdamW uses betas $(0.95,0.999)$ and weight decay $10^{-6}$. The learning rate is warmed up for 500 steps and then follows cosine decay. Gradients are clipped to unit norm. Evaluation uses exponential moving average weights with power $0.75$ and maximum decay $0.9999$. Every policy is trained from scratch on all target demonstrations and the selected source trajectories. Training examples are sampled uniformly from the combined data, while the normalization statistics are computed from the target set.

\subsection{Real-World Policy Training Details}

\paragraph{Base policy and scoring rollouts.}
We obtain the base-policy weights by training a task-specific image-conditioned Diffusion Policy on simulated demonstrations using behavior cloning with a diffusion denoising objective. We deploy this simulation-pretrained policy on the physical robot and retain successful executions confirmed by the operator. These recorded trajectories supply both the target demonstrations for co-training and the target-domain scoring rollouts for curation. The base-policy weights remain frozen when computing source and target attribution gradients. After selection, we initialize a new policy and train it on all target demonstrations together with the curated source subset. Final evaluation uses separate real-world executions.

\paragraph{Robot platform and cameras.}
We use a Franka Research 3 robot equipped with a Franka Gripper. The camera setup consists of an Intel RealSense D405 mounted at the wrist, an Intel RealSense D435 providing the side view, and an Intel RealSense L515 providing the head view. The policy receives RGB observations from all three cameras together with proprioceptive observations consisting of the seven arm joint positions and the gripper width.

\paragraph{Task objects and environment alignment.}
The task objects are fabricated using a Bambu Lab P1S 3D printer. We use corresponding objects in simulation and the physical workspace, calibrate the cameras, and match the object-placement regions across domains. This alignment reduces differences arising from object geometry, camera placement, and workspace configuration while retaining the dynamics and visual differences encountered during physical execution. Figure~\ref{fig:app-real-task-trajectories} shows the corresponding task executions on the physical robot.

\begin{figure}[t!]
    \centering
    \includegraphics[width=\textwidth]{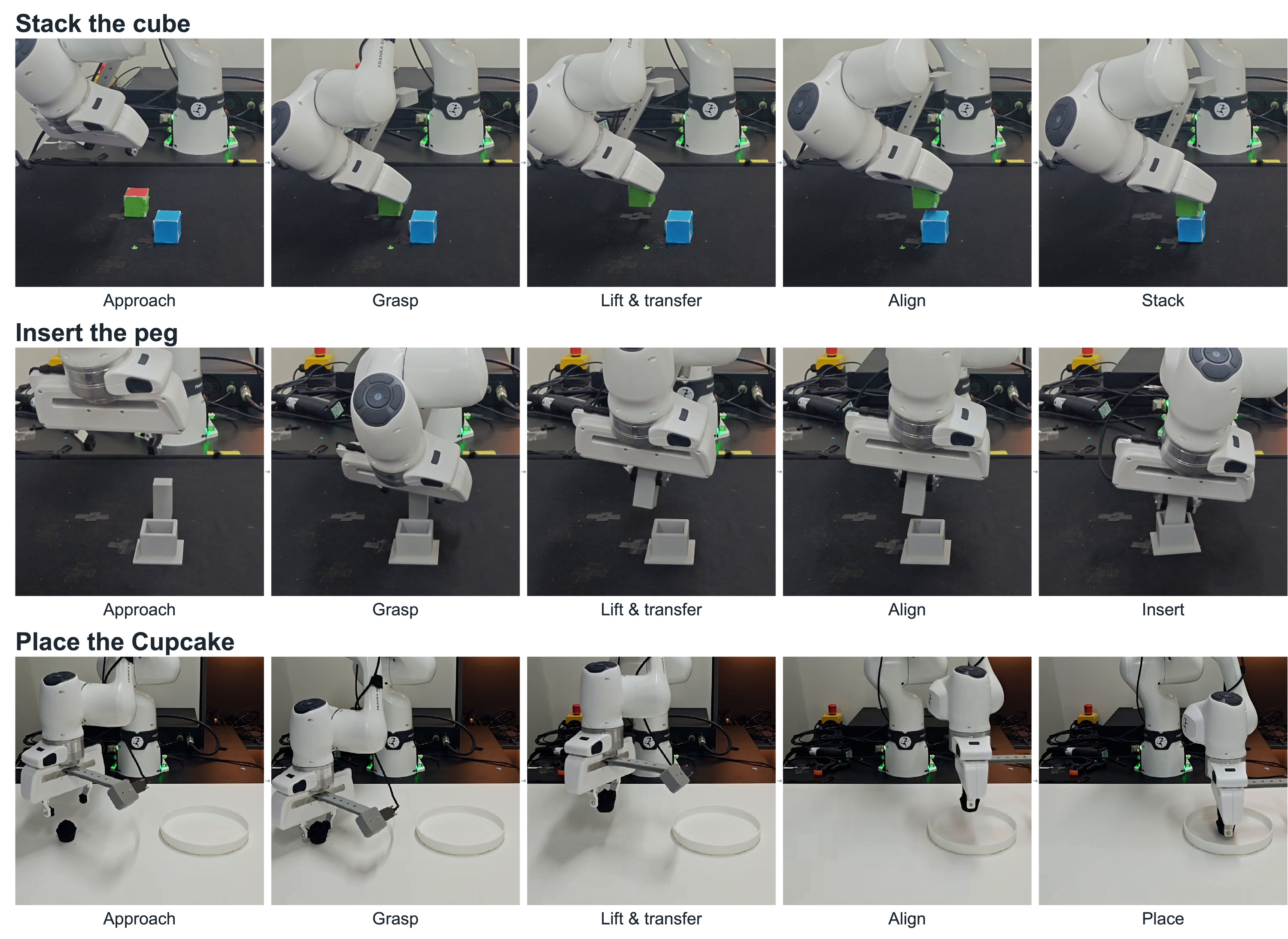}
    \caption{Example real-world executions of \texttt{StackCube}, \texttt{Peg}, and \texttt{CupCake}, shown from top to bottom. Frames progress from left to right through grasping, transfer, and final placement or insertion.}
    \label{fig:app-real-task-trajectories}
\end{figure}

\paragraph{Policy architecture.}
We use an image-conditioned Diffusion Policy with a separate ResNet-18 encoder for each camera. The visual features are concatenated with the proprioceptive observations from two consecutive time steps and supplied as global conditioning to a temporal U-Net. Input images are resized to $84 \times 84$ pixels and randomly cropped to $76 \times 76$ pixels during training. The U-Net predicts a sequence of eight-dimensional actions, with seven arm joint commands and one gripper command per time step. We use a prediction horizon of 16 and a DDPM noise scheduler with 100 diffusion steps. The image encoders and action network are trained jointly.

\paragraph{Optimization.}
We train the policies from scratch for 150 epochs using AdamW with a batch size of 64. The learning rate is warmed up for 500 optimization steps and then follows a cosine decay schedule. We minimize the mean-squared error between the predicted and sampled diffusion noise over valid action positions. We maintain an exponential moving average of the policy parameters with a maximum decay of 0.9999. The training configuration is summarized in Table~\ref{tab:sim2real-training-config}.

\begin{table}[t!]
    \centering
    \caption{Policy and training configuration for sim-to-real co-training.}
    \label{tab:sim2real-training-config}
    \small
    \setlength{\tabcolsep}{8pt}
    \renewcommand{\arraystretch}{1.1}
    \begin{tabular}{ll}
        \toprule
        Parameter & Value \\
        \midrule
        Observation history & 2 steps \\
        Prediction horizon & 16 steps \\
        Image resolution & $84 \times 84$ \\
        Training crop & $76 \times 76$ \\
        Visual encoder & ResNet-18 per camera \\
        U-Net channel widths & 256, 512, 1024 \\
        Diffusion steps & 100 \\
        Noise schedule & Cosine \\
        Optimizer & AdamW \\
        Learning rate & $10^{-4}$ \\
        AdamW betas & $(0.95, 0.999)$ \\
        Weight decay & $10^{-6}$ \\
        Learning-rate schedule & Cosine decay \\
        Warmup & 500 optimization steps \\
        Batch size & 64 \\
        Training epochs & 150 \\
        Real / simulation sampling & 20\% / 80\% \\
        Maximum EMA decay & 0.9999 \\
        \bottomrule
    \end{tabular}
\end{table}

\subsection{Baseline Implementations}
\label{app:baseline-implementations}

All baselines curate complete trajectories from the same candidate pool. Filtering removes the lowest-ranked demonstrations from the initial training split. Selection adds the highest-ranked source trajectories to the fixed target set. Random samples trajectories without replacement. Target-only and full-pool controls use the same policy and optimization protocol as the curated subsets. Scalar-score methods share a fixed trajectory ordering across budgets. Budget-dependent methods receive the requested cardinality as part of their subset objective.

In Sim-to-Real, policy-conditioned scoring uses the shared simulation-pretrained base policy, and target-conditioned methods receive the same recorded successful real-world data. The selected source subsets are combined with this fixed target set for training from scratch.

\paragraph{DemInf.}
We use the released DemInf implementation for single-simulator filtering~\citep{hejna2025deminf}. For Sim-to-Sim, the identity encoder is applied to the state and action. The two feature groups are standardized separately over the source pool. We estimate their local dependence with the KSG estimator using Chebyshev distance and $k=5$. Frame-level estimates are averaged within each trajectory, and trajectories with larger dependence are selected.

\paragraph{Demo-SCORE.}
The single-simulator implementation follows the adaptation in CUPID~\citep{chen2025demoscore,agia2025cupid}. For Sim-to-Sim, each target-rollout frame inherits its rollout success label. Actions are standardized using the target-rollout statistics. We train a $7\rightarrow8\rightarrow8\rightarrow1$ classifier for 200 epochs using dropout $0.3$ and weighted binary cross entropy. The optimizer is AdamW with learning rate $10^{-4}$ and weight decay $0.1$. A source trajectory is scored by the mean predicted success probability of its actions.

\paragraph{Success Similarity.}
The single-simulator implementation follows CUPID and uses low-dimensional state observations~\citep{agia2025cupid}. For Sim-to-Sim, source states and target-rollout states are jointly min-max normalized. We compute the mean pairwise Euclidean distance between every source trajectory and each successful target rollout. The negative distance averaged over successful rollouts defines the trajectory score. For Sim-to-Real, we encode image and proprioceptive observations with the frozen Diffusion Policy observation encoder and average the features over the sampled windows of each candidate or real rollout. We rank candidates by their mean cosine similarity to the successful real-rollout representations.

\paragraph{CUPID.}
We follow the closed-loop attribution procedure released with CUPID~\citep{agia2025cupid}. The single-simulator implementation evaluates each transition over 64 diffusion-timestep and noise samples before applying a 4,000-dimensional TRAK projection. For Sim-to-Sim, we compute full-policy gradients of the behavior-cloning loss and use the same projection dimension. In these simulated settings, frame-level influences are summed within source demonstrations and target rollouts. The resulting utility adds influence on successful rollouts and subtracts influence on failed rollouts. For Sim-to-Real, we average projected gradients within each source candidate and real rollout, then rank candidates by their mean estimated influence across the recorded successful rollouts. All real scoring rollouts receive positive return labels.

\paragraph{Quality over Quantity.}
We preserve the normalized policy-gradient similarity of Quality over Quantity~\citep{lee2026quality}. Diffusion Policy gradients are computed from the final denoising block and compressed to 2,048 dimensions using OPORP. We sample at most 32 training windows from each demonstration. Each window receives its maximum similarity to a validation window, followed by averaging within its trajectory. For Sim-to-Sim, we use the normalized Gaussian log-likelihood gradient of the action-head weight and bias. The gradients are compressed to 1,024 dimensions. Each source frame is matched to its most similar target frame from a reference set capped at 4,096 frames. These similarities are then averaged within the source trajectory.

\paragraph{PSD.}
We implement the spectral quality metric of~\citet{sojib2026efficient} using the end-effector Cartesian position. A full discrete Fourier transform is applied independently to each coordinate without detrending or removing its constant component. Spectral power is summed across coordinates and frequencies. Trajectories with lower total power receive higher priority.

\paragraph{DataMIL.}
For single-simulator Diffusion Policy, we use a one-step adaptation of the DataMIL metagradient estimator on the final denoising block~\citep{dass2026datamil}. Each source trajectory receives a differentiable weight, and its score is obtained by differentiating target validation loss through the inner update. For Sim-to-Sim, all features use target-domain normalization and every fourth frame is retained for attribution. We reserve 100 target trajectories for meta-validation. The pretrained policy trunk is frozen, while the action head is differentiably unrolled for 200 updates with learning rate $10^{-3}$ and batch size 256. Each update contains equal numbers of target and weighted source examples. The negative derivative of target validation error with respect to each trajectory weight gives its score. All target trajectories are restored for final co-training. For Sim-to-Real, we use a one-step gradient-alignment adaptation. We compute full-policy gradients of the squared denoiser-output objective and project them to 2,048 dimensions, averaging over sampled windows within each source candidate and real rollout. The reference gradient is the mean across successful real rollouts. We rank source candidates by their gradient inner product with this reference and retain all target demonstrations for final co-training.

\paragraph{FAKTUAL.}
For simulated experiments, we use the exact level-two trajectory signature and normalized signature Gram matrix of FAKTUAL~\citep{sirigiri2026faktual}. In the single-simulator setting, each path contains the object state, end-effector pose, gripper state, and raw action. All path features are standardized and augmented with normalized time, then a zero base point is prepended. We follow the entropy and determinant selection procedure with entropy fraction $0.5$. Its stochastic-greedy tolerance is $0.1$ and determinant regularization is $10^{-8}$. For Sim-to-Sim, the state-action path retains the most recent value of each history-based state term. We use determinant-greedy selection with the regularized objective $\log\det(\mathbf K+10^{-4}\mathbf I)$. The selected prefixes provide the subsets at different budgets. For Sim-to-Real, we adapt the determinant-based selector to the projected full-policy gradient representations used by the DataMIL adaptation above. We normalize these source representations to construct their Gram matrix and greedily select candidates by the log-determinant gain with regularization $10^{-4}$. This adaptation measures diversity in policy-gradient space and uses only source representations for subset selection.

\paragraph{SIEVE.}
We preserve the segmentation, primitive discovery, and pattern-aware selection stages of SIEVE~\citep{wu2026sieve}. A gripper-command transition defines a segment boundary after the new state persists for five frames. Each segment is represented by standardized state-action features at its start, middle, and end. The single-simulator implementation uses the complete low-dimensional observation, while Sim-to-Sim uses the most recent value of each state term and the current action. The descriptors are projected to at most 256 principal components. We search 20 candidate primitive counts on a 10\% trajectory subset, using minimum average cluster sizes of 20 and 1,000 for the two settings. Each trajectory is represented as an ordered primitive pattern. Budgets are allocated by the marginal coverage of primitives and transitions, while cosine-medoid distance orders demonstrations that share a pattern.

\paragraph{TAROT.}
We adapt the gradient transport representation of TAROT~\citep{feng2025tarot}. Diffusion Policy trajectories use the mean DDPM-loss gradient of the final denoising block. Sim-to-Sim trajectories use the mean action-head gradient over the trajectory. Both representations are compressed to 2,048 dimensions and jointly centered. They are whitened with covariance regularization $10^{-5}$ and normalized to unit length. Selection expands source neighborhoods around target trajectories. An entropic optimal-transport solve with regularization $0.01$ and 200 iterations resolves the final neighborhood when it exceeds the remaining budget. We disable data-repetition weights so that each method changes only the selected source subset.

\subsection{Evaluation Protocol}
\label{app:evaluation-protocol}
Final evaluation rollouts are collected from the retrained policies and are separate from the scoring rollouts used for curation. In the simulated OmniReset tasks, \texttt{Peg} succeeds when a 160-step rollout reaches position error below $0.01$ m and roll-plus-pitch error below $0.1$ rad. \texttt{StackCube} requires stable pose and object contact after release and settling, with position error below $0.005$ m and roll-plus-pitch error below $0.025$ rad. \texttt{CupCake} requires the same pose thresholds to hold for five consecutive steps within a 160-step rollout.

For each real-world task and method, we evaluate the checkpoint from the final training epoch over 40 trials. The lighting-shift experiment evaluates the same checkpoint over 20 additional trials.

\subsection{Influence-Space Visualization}
\label{app:visualization-results}

To examine how different curation criteria balance utility and coverage, we visualize the \texttt{StackCube} subsets selected at a $10\%$ source budget. Figure~\ref{fig:stackcube-influence-space} projects the orthogonal influence components onto two principal components and colors selected demonstrations by utility percentile. CUPID concentrates high-utility trajectories in a narrow region, indicating repeated closed-loop influence patterns. Diversity-based methods cover a broader region but include more low-utility trajectories because their selection spaces are not grounded in target-policy performance. TUCO retains high-utility candidates while spanning a broader range of influence directions, illustrating how closed-loop utility and marginal coverage jointly shape the selected subset.

\begin{figure}[htbp]
    \centering
    \includegraphics[width=\textwidth]{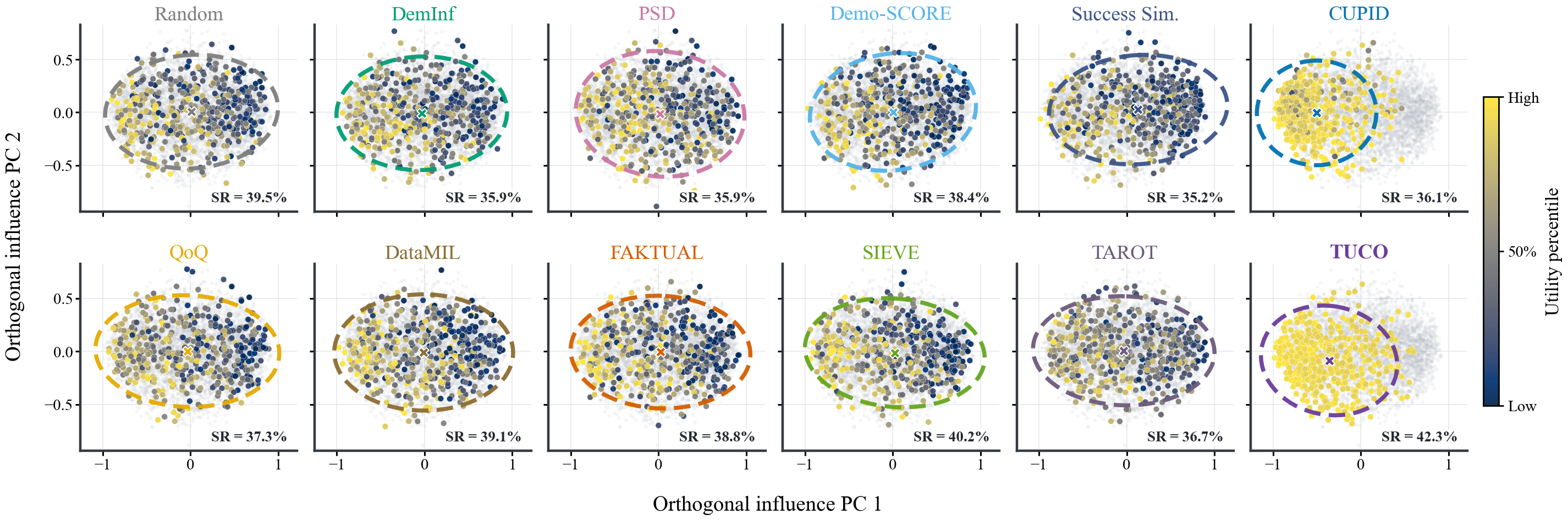}
    \caption{Influence-space visualization of \texttt{StackCube} subsets selected at a $10\%$ source budget. Gray points denote the source pool and colored points denote selected demonstrations by utility percentile. Ellipses summarize the PCA-projected spread of each subset.}
    \label{fig:stackcube-influence-space}
\end{figure}

\section{Extended Experiments}
\subsection{Hyperparameter Sensitivity}
\label{app:hyperparameter-sensitivity}

We study how the utility--coverage weight $\lambda$ and the cone threshold $\gamma$ affect subset selection. To isolate their effects, we vary $\lambda$ with $\gamma=0.01$ and vary $\gamma$ with $\lambda=0.2$.

\paragraph{Effect of $\lambda$.}
Table~\ref{tab:lambda-sensitivity-single-sim} presents the complete single-simulator results. Averaged across all 25 task and budget configurations, $\lambda\in\{0.1,0.2,0.5\}$ achieves success rates of $59.76\%$, $61.10\%$, and $60.18\%$, respectively. The configuration $\lambda=0.2$ obtains the highest or tied-highest result in 17 configurations and performs consistently in both filtering and selection. The best value still varies in individual cases. Square selection at the $75\%$ budget changes by up to $9.8\%$ across the tested values, while Transport selection at the $10\%$ budget favors $\lambda=0.5$. The preferred balance between utility and coverage can therefore depend on the task and subset size.

\begin{table}[H]
    \centering
    \caption{Sensitivity to $\lambda$ in the single-simulator setting with $\gamma=0.01$. Entries are target-domain success rates reported as mean $\pm$ standard deviation. Bold parameter values denote the default configuration.}
    \label{tab:lambda-sensitivity-single-sim}
    \footnotesize
    \setlength{\tabcolsep}{3.5pt}
    \renewcommand{\arraystretch}{1.10}
    \begin{tabular}{lllccccc}
        \toprule
        Protocol & Task & $\lambda$ & $10\%$ & $25\%$ & $50\%$ & $75\%$ & $90\%$ \\
        \midrule
        Filtering & Lift & $0.1$ & $1.000{\pm}0.000$ & $1.000{\pm}0.000$ & $1.000{\pm}0.000$ & $0.986{\pm}0.013$ & $0.960{\pm}0.023$ \\
                  &      & $\mathbf{0.2}$ & $1.000{\pm}0.000$ & $1.000{\pm}0.000$ & $1.000{\pm}0.000$ & $1.000{\pm}0.000$ & $0.988{\pm}0.016$ \\
                  &      & $0.5$ & $1.000{\pm}0.000$ & $1.000{\pm}0.000$ & $1.000{\pm}0.000$ & $0.988{\pm}0.009$ & $1.000{\pm}0.000$ \\
        \addlinespace
        Filtering & Square & $0.1$ & $0.760{\pm}0.006$ & $0.710{\pm}0.024$ & $0.730{\pm}0.013$ & $0.534{\pm}0.029$ & $0.364{\pm}0.026$ \\
                  &        & $\mathbf{0.2}$ & $0.762{\pm}0.005$ & $0.732{\pm}0.018$ & $0.734{\pm}0.014$ & $0.528{\pm}0.021$ & $0.360{\pm}0.024$ \\
                  &        & $0.5$ & $0.738{\pm}0.011$ & $0.700{\pm}0.019$ & $0.706{\pm}0.020$ & $0.536{\pm}0.031$ & $0.388{\pm}0.021$ \\
        \addlinespace
        Filtering & Transport & $0.1$ & $0.400{\pm}0.012$ & $0.370{\pm}0.019$ & $0.372{\pm}0.024$ & $0.246{\pm}0.015$ & $0.150{\pm}0.013$ \\
                  &           & $\mathbf{0.2}$ & $0.412{\pm}0.013$ & $0.420{\pm}0.020$ & $0.334{\pm}0.015$ & $0.268{\pm}0.009$ & $0.196{\pm}0.017$ \\
                  &           & $0.5$ & $0.398{\pm}0.010$ & $0.400{\pm}0.014$ & $0.402{\pm}0.031$ & $0.268{\pm}0.017$ & $0.148{\pm}0.015$ \\
        \addlinespace
        Selection & Square & $0.1$ & $0.592{\pm}0.019$ & $0.594{\pm}0.012$ & $0.628{\pm}0.016$ & $0.680{\pm}0.010$ & $0.714{\pm}0.013$ \\
                  &        & $\mathbf{0.2}$ & $0.572{\pm}0.034$ & $0.600{\pm}0.009$ & $0.634{\pm}0.015$ & $0.764{\pm}0.016$ & $0.758{\pm}0.010$ \\
                  &        & $0.5$ & $0.572{\pm}0.026$ & $0.592{\pm}0.011$ & $0.670{\pm}0.013$ & $0.666{\pm}0.014$ & $0.738{\pm}0.012$ \\
        \addlinespace
        Selection & Transport & $0.1$ & $0.382{\pm}0.013$ & $0.418{\pm}0.018$ & $0.488{\pm}0.027$ & $0.462{\pm}0.012$ & $0.400{\pm}0.010$ \\
                  &           & $\mathbf{0.2}$ & $0.344{\pm}0.011$ & $0.452{\pm}0.016$ & $0.484{\pm}0.021$ & $0.476{\pm}0.015$ & $0.458{\pm}0.009$ \\
                  &           & $0.5$ & $0.424{\pm}0.009$ & $0.396{\pm}0.013$ & $0.412{\pm}0.022$ & $0.475{\pm}0.016$ & $0.428{\pm}0.012$ \\
        \bottomrule
    \end{tabular}
\end{table}

Table~\ref{tab:lambda-sensitivity-sim2sim} extends this analysis to Sim-to-Sim co-training. Performance becomes more sensitive to $\lambda$ under simulator shift, especially at low and intermediate budgets. On \texttt{StackCube}, the success rate at the $10\%$ budget ranges from $32.9\%$ to $42.3\%$. On \texttt{CupCake}, $\lambda=0.5$ performs best at the $25\%$ budget. Across all 18 configurations, $\lambda=0.2$ obtains the highest or tied-highest result in 15 cases and the highest budget-averaged success rate on every task. These results reveal task-dependent sensitivity while supporting $\lambda=0.2$ as a stable shared configuration for the single-simulator and Sim-to-Sim settings.

\begin{table}[htbp]
    \centering
    \caption{Sensitivity to $\lambda$ in Sim-to-Sim co-training with $\gamma=0.01$. Entries are target-domain success rates reported as mean $\pm$ standard deviation. Bold parameter values denote the default configuration.}
    \label{tab:lambda-sensitivity-sim2sim}
    \footnotesize
    \setlength{\tabcolsep}{2.5pt}
    \renewcommand{\arraystretch}{1.10}
    \begin{tabular}{lccccccc}
        \toprule
        Task & $\lambda$ & $5\%$ & $10\%$ & $15\%$ & $25\%$ & $50\%$ & $75\%$ \\
        \midrule
        \texttt{Peg} & $0.1$ & $0.572{\pm}0.044$ & $0.588{\pm}0.045$ & $0.634{\pm}0.052$ & $0.678{\pm}0.010$ & $0.709{\pm}0.040$ & $0.674{\pm}0.013$ \\
            & $\mathbf{0.2}$ & $0.567{\pm}0.033$ & $0.614{\pm}0.026$ & $0.633{\pm}0.031$ & $0.716{\pm}0.023$ & $0.709{\pm}0.040$ & $0.674{\pm}0.013$ \\
            & $0.5$ & $0.564{\pm}0.069$ & $0.591{\pm}0.030$ & $0.632{\pm}0.045$ & $0.704{\pm}0.051$ & $0.709{\pm}0.040$ & $0.674{\pm}0.013$ \\
        \addlinespace
        \texttt{StackCube} & $0.1$ & $0.373{\pm}0.036$ & $0.329{\pm}0.054$ & $0.387{\pm}0.031$ & $0.369{\pm}0.031$ & $0.429{\pm}0.038$ & $0.426{\pm}0.029$ \\
                  & $\mathbf{0.2}$ & $0.399{\pm}0.040$ & $0.423{\pm}0.044$ & $0.432{\pm}0.028$ & $0.421{\pm}0.021$ & $0.455{\pm}0.015$ & $0.426{\pm}0.029$ \\
                  & $0.5$ & $0.387{\pm}0.035$ & $0.387{\pm}0.050$ & $0.375{\pm}0.029$ & $0.376{\pm}0.010$ & $0.429{\pm}0.038$ & $0.426{\pm}0.029$ \\
        \addlinespace
        \texttt{CupCake} & $0.1$ & $0.503{\pm}0.025$ & $0.513{\pm}0.029$ & $0.527{\pm}0.013$ & $0.536{\pm}0.033$ & $0.522{\pm}0.040$ & $0.495{\pm}0.035$ \\
                & $\mathbf{0.2}$ & $0.525{\pm}0.033$ & $0.522{\pm}0.045$ & $0.541{\pm}0.026$ & $0.545{\pm}0.033$ & $0.522{\pm}0.040$ & $0.495{\pm}0.035$ \\
                & $0.5$ & $0.505{\pm}0.025$ & $0.505{\pm}0.033$ & $0.523{\pm}0.039$ & $0.562{\pm}0.031$ & $0.522{\pm}0.040$ & $0.495{\pm}0.035$ \\
        \bottomrule
    \end{tabular}
\end{table}

\paragraph{Effect of $\gamma$.}
We next study the cone threshold in Sim-to-Sim co-training with $\lambda=0.2$. As shown in Table~\ref{tab:gamma-sensitivity-sim2sim}, $\gamma=0.01$ outperforms $\gamma=0.1$ in 12 of the 18 task and budget configurations and matches it in five more. The larger threshold produces a higher result only on \texttt{CupCake} at the $10\%$ budget, where the difference is $0.5\%$. Its lower performance on \texttt{Peg} and \texttt{StackCube} suggests that a narrow feasible cone can exclude trajectories with useful complementary effects. The setting $\gamma=0.01$ retains a broader pool of performance-aligned candidates and yields more consistent performance across tasks.

\begin{table}[htbp]
    \centering
    \caption{Sensitivity to $\gamma$ in Sim-to-Sim co-training with $\lambda=0.2$. Entries are target-domain success rates reported as mean $\pm$ standard deviation. Bold parameter values denote the default configuration.}
    \label{tab:gamma-sensitivity-sim2sim}
    \footnotesize
    \setlength{\tabcolsep}{2.5pt}
    \renewcommand{\arraystretch}{1.10}
    \begin{tabular}{lccccccc}
        \toprule
        Task & $\gamma$ & $5\%$ & $10\%$ & $15\%$ & $25\%$ & $50\%$ & $75\%$ \\
        \midrule
        \texttt{Peg} & $\mathbf{0.01}$ & $0.567{\pm}0.033$ & $0.614{\pm}0.026$ & $0.633{\pm}0.031$ & $0.716{\pm}0.023$ & $0.709{\pm}0.040$ & $0.674{\pm}0.013$ \\
            & $0.1$ & $0.559{\pm}0.007$ & $0.596{\pm}0.044$ & $0.610{\pm}0.054$ & $0.659{\pm}0.036$ & $0.709{\pm}0.040$ & $0.674{\pm}0.013$ \\
        \addlinespace
        \texttt{StackCube} & $\mathbf{0.01}$ & $0.399{\pm}0.040$ & $0.423{\pm}0.044$ & $0.432{\pm}0.028$ & $0.421{\pm}0.021$ & $0.455{\pm}0.015$ & $0.426{\pm}0.029$ \\
                  & $0.1$ & $0.364{\pm}0.026$ & $0.360{\pm}0.021$ & $0.421{\pm}0.039$ & $0.404{\pm}0.024$ & $0.440{\pm}0.015$ & $0.426{\pm}0.029$ \\
        \addlinespace
        \texttt{CupCake} & $\mathbf{0.01}$ & $0.525{\pm}0.033$ & $0.522{\pm}0.045$ & $0.541{\pm}0.026$ & $0.545{\pm}0.033$ & $0.522{\pm}0.040$ & $0.495{\pm}0.035$ \\
                & $0.1$ & $0.475{\pm}0.035$ & $0.527{\pm}0.031$ & $0.523{\pm}0.034$ & $0.540{\pm}0.052$ & $0.522{\pm}0.040$ & $0.495{\pm}0.035$ \\
        \bottomrule
    \end{tabular}
\end{table}

Sensitivity decreases at larger selection budgets, where several parameter values produce identical results. As the budget grows, the selected subset covers the feasible pool and the remaining positions follow the same utility ranking. Different configurations consequently converge to the same subset. The effects of $\lambda$ and $\gamma$ are more visible at lower budgets, where each selection must resolve a stronger trade-off among candidate trajectories.

\subsection{Robustness to Unseen Lighting Conditions.}
 We examine whether TUCO retains its advantage under an unseen visual shift by introducing an RGB light strip around the peg and hole during \texttt{Peg} evaluation
  (Figure~\ref{fig:sim2real-lightstrip}). We evaluate each policy over 20 trials. Table~\ref{tab:sim2real-peg-lighting} shows that TUCO achieves the highest success rate at 25.0\%, exceeding both CUPID and target-only training by 10 percentage points, while full-pool co-training fails every trial. Despite the substantial performance drop under altered lighting, TUCO maintains its advantage over the evaluated baselines.

\begin{figure}[htbp]
    \centering
    \begin{minipage}[c]{0.34\textwidth}
        \centering
        \includegraphics[height=1.4in]{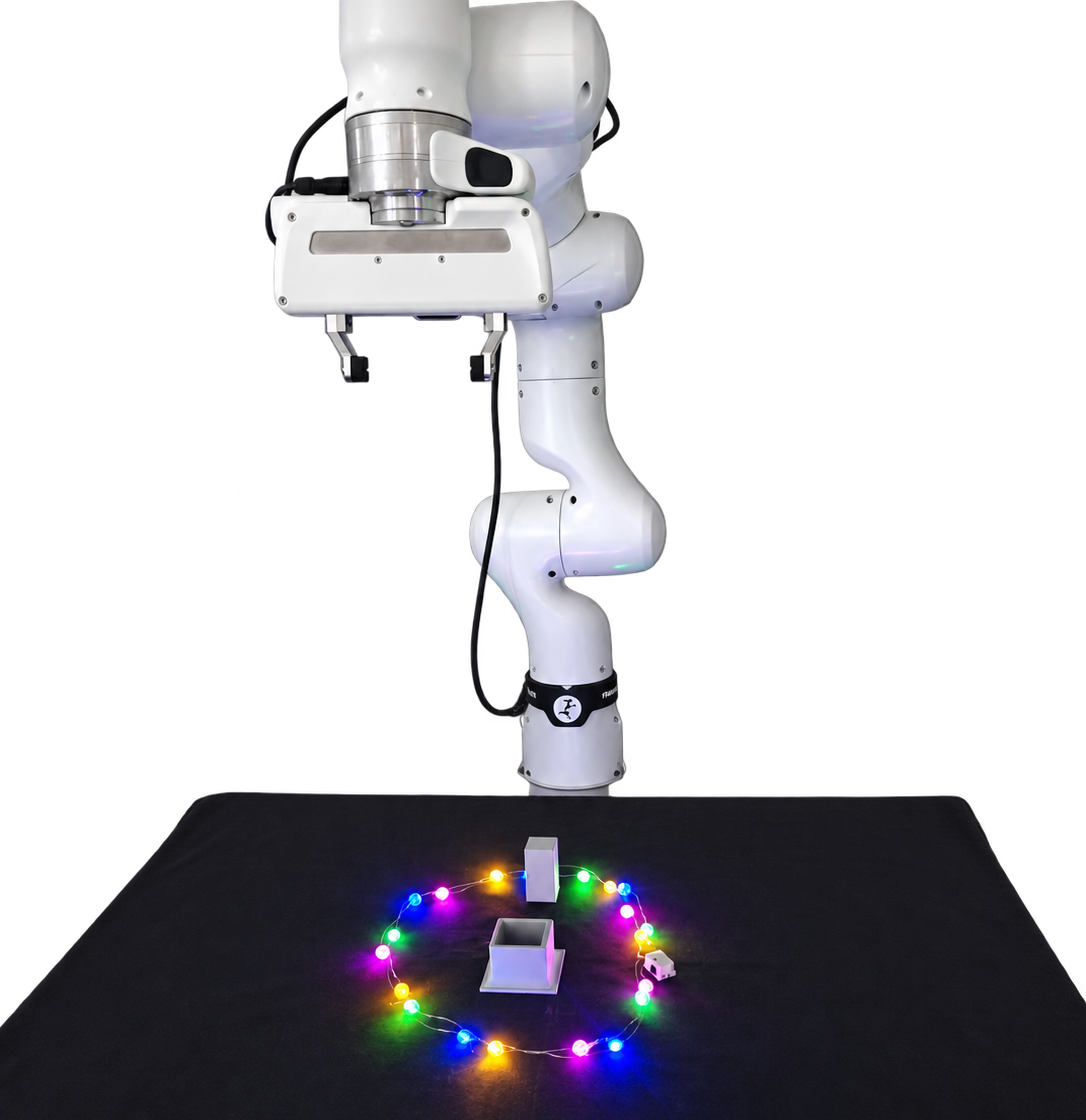}
        \captionof{figure}{Physical \texttt{Peg} setup with the RGB light strip enabled at test time. The strip surrounds the peg and hole and changes illumination and color within the workspace without moving any object.}
        \label{fig:sim2real-lightstrip}
    \end{minipage}\hfill
    \begin{minipage}[c]{0.60\textwidth}
        \centering
        \captionof{table}{\texttt{Peg} success rates (\%) with the RGB light strip enabled at test time. The light-strip column uses 20 trials per method. The standard-lighting column repeats the 40-trial entries of Table~\ref{tab:sim2real-tasks}.}
        \label{tab:sim2real-peg-lighting}
        \small
        \setlength{\tabcolsep}{8pt}
        \begin{tabular}{@{}lcc@{}}
            \toprule
            Method & Standard lighting & RGB light strip \\
            \midrule
            10 real demos & 15.0 & 15.0 \\
            10 real demos + all sim data & 17.5 & 0.0 \\
            CUPID & 75.0 & 15.0 \\
            TUCO (ours) & \textbf{82.5} & \textbf{25.0} \\
            \bottomrule
        \end{tabular}
    \end{minipage}
\end{figure}

\section{Extended Discussions}
\label{app:extended-discussion}

\subsection{Insights into Robust Curation across Domain Gaps}

Our study shows that the same TUCO criterion remains effective in single-simulator, Sim-to-Sim, and Sim-to-Real settings. This result supports grounding curation in observed target-domain behavior as the domain gap changes. Source-domain quality, latent similarity, and trajectory geometry can provide useful proxies, but they may increasingly reflect rendering, sensing, or dynamics differences across domains. TUCO represents each source demonstration through its estimated effects on shared target-domain scoring rollouts. In Sim-to-Real, utility measures aggregate support for recorded successful executions, while coverage measures complementary influence patterns across them. This feedback supports the same decomposition and selection mechanism using a simulation-pretrained base policy.

Our study also reveals that performance grounding alone is insufficient when the output is a dataset. The value of a source trajectory depends on the fixed target demonstrations and the influence modes already represented by the selected subset. Independent ranking can repeatedly select trajectories with similar training effects. TUCO addresses this dependency through set-level marginal coverage while retaining direct target-performance guidance. The consistency observed across domain gaps therefore supports two general design principles for curation: evaluate data through downstream target behavior and optimize the composition of the selected set.

\subsection{Source Composition as a Co-Training Design Variable}

To the best of our knowledge, this study provides the first systematic evidence that active curation improves both Sim-to-Sim and Sim-to-Real co-training. More broadly, it introduces a new perspective on co-training: in addition to choosing the source-to-target mixing ratio, one should decide which source demonstrations occupy the source portion of that mixture.   The source-to-target demonstration ratio specifies the relative amounts
of available training data, while curation determines which source behaviors are included. Our comparisons with the full source pool establish the benefit of selecting a subset. Comparisons with random selection at the same source budget and with fixed target demonstrations further isolate the value of source-data composition.

This distinction is particularly important for large simulation pools. Simulation can cover behaviors missing from scarce target data, but it can also overrepresent redundant trajectories and domain-specific strategies. Active curation treats this pool as a candidate library and retains demonstrations that support target success while complementing existing target data. Our findings therefore shift the co-training question from only \emph{how much simulation data should be mixed} toward \emph{which simulation demonstrations should be mixed}. This additional design dimension turns simulation scale into more effective cross-domain supervision and creates a path toward stronger policies with lower training and real-data collection costs.

\subsection{Limitations}
\paragraph{Automatic curation budget.}
TUCO optimizes which source demonstrations to retain under a prescribed curation budget and source-to-target mixing ratio. This design isolates the effect of source-data composition and supports controlled comparison across curation methods. In deployment, however, the appropriate subset size is typically unknown and may vary with the domain gap and the behavioral support already provided by the target demonstrations. A small budget may omit useful source behaviors, while a large budget may reintroduce redundancy or source-specific patterns after the most valuable candidates have been selected. TUCO evaluates set-dependent marginal value under a fixed cardinality constraint and therefore provides no calibrated boundary between demonstrations that should be retained and those that provide no further benefit. This limitation reflects a common formulation in data curation, where candidates receive selection scores and the final subset is determined by a predefined budget. We believe that it is an important direction to
automatically identify the useful portion of a source pool, independent of a predefined retention ratio. We leave joint optimization of subset composition and cardinality for future work.

\paragraph{Broader transfer settings.}
Our evaluation covers manipulation tasks across single-simulator, Sim-to-Sim, and Sim-to-Real settings with compatible observation and action interfaces. This scope allows source and target demonstrations to train the same policy and makes their influence profiles directly comparable. Cross-task or cross-embodiment transfer may additionally require aligning observations, actions, and task semantics before source demonstrations can be evaluated in a shared closed-loop space. Large vision-language-action policies may also produce different influence geometry from the task-specific policies studied here. We also leave adaptation and evaluation of TUCO under these broader transfer settings for future work.